\def\eqref#1{equation~\ref{#1}}
\def\1{\bm{1}}
\DeclareMathAlphabet{\mathsfit}{\encodingdefault}{\sfdefault}{m}{sl}
\SetMathAlphabet{\mathsfit}{bold}{\encodingdefault}{\sfdefault}{bx}{n}
\newcommand{\R}{\mathbb{R}}
\newtheorem{theorem}{Theorem}
\newtheorem{definition}{Definition}
\newtheorem{example}{Example}
\newtheorem{lemma}{Lemma}
\newtheorem{prop}{Proposition}
\newtheorem*{fact*}{Fact}
\newtheorem{corollary}{Corollary}
\newcommand{%
     \scalebox{}{\input{}}  
}[2]{%
     \scalebox{#1}{\input{#2}}  
}
\def\namedlabel#1#2{\begingroup
   \def\@currentlabel{#2}%
   \label{#1}\endgroup
}
\theoremstyle{remark}
\newtheorem{remark}{Remark}
\theoremstyle{remark}
\DeclareMathOperator{\Aff}{Aff}
\DeclareMathOperator{\Hom}{Hom}
\DeclareMathOperator{\ReLU}{ReLU}
\DeclareMathOperator{\cR}{\mathcal{R}}
\DeclareMathOperator{\cS}{\mathcal{S}}
\DeclareMathOperator{\cN}{\mathcal{N}}
\DeclareMathOperator{\cP}{\mathcal{P}}
\DeclareMathOperator{\cQ}{\mathcal{Q}}
\DeclareMathOperator{\cI}{\mathcal{I}}
\DeclareMathOperator{\cF}{\mathcal{F}}
\DeclareMathOperator{\cC}{\mathcal{C}}
\DeclareMathOperator{\cD}{\mathcal{D}}
\DeclareMathOperator{\Z}{\mathbb{Z}}
\DeclareMathOperator{\C}{\mathbb{C}}
\DeclareMathOperator{\Q}{\mathbb{Q}}
\DeclareMathOperator{\N}{\mathbb{N}}
\DeclareMathOperator{\smcirc}{\hspace{0.01em} \textstyle \raisebox{0.1ex}{$\mathsmaller{\circ}$} \hspace{0.01em}}
\newcommand{\rev}[1]{{\color{black}#1}}
\title{Separation Power of Equivariant \\ Neural Networks}
\author{%
  Marco Pacini$^{1, 2} \quad$
  Xiaowen Dong$^3 \quad$
  Bruno Lepri$^2 \quad$
  Gabriele Santin$^4 \quad$ \\
  $^1$University of Trento,
  $^2$Fondazione Bruno Kessler, \\
  $^3$University of Oxford,
  $^4$University of Venice \\
}
\begin{document}



\maketitle

\begin{abstract}
  The separation power of a machine learning model refers to its ability to distinguish between different inputs and is often used as a proxy for its expressivity. Indeed, knowing the separation power of a family of models is a necessary condition to obtain fine-grained universality results. In this paper, we analyze the separation power of equivariant neural networks, such as convolutional and permutation-invariant networks.
We first present a complete characterization of inputs indistinguishable by models derived by a given architecture. From this results, we derive how separability is influenced by hyperparameters and architectural choices—such as activation functions, depth, hidden layer width, and representation types. Notably, all non-polynomial activations, including ReLU and sigmoid, are equivalent in expressivity and reach maximum separation power. Depth improves separation power up to a threshold, after which further increases have no effect. Adding invariant features to hidden representations does not impact separation power. Finally, block decomposition of hidden representations affects separability, with minimal components forming a hierarchy in separation power that provides a straightforward method for comparing the separation power of models.

\end{abstract}

\section{Introduction}


    
    
Alongside the proliferation and success of equivariant models \citep{wood_representation_1996, cohen_equivariant_2021, maron_invariant_2018}, there has been a growing interest in understanding the fundamental reasons behind their performances, and in assessing their expressive power \citep{maron_provably_2019, yarotsky_universal_2018}.
For traditional deep learning approaches, this expressive power is usually quantified in terms of universality \citep{e_machine_2019}, or their ability to approximate any element of a given class of functions to arbitrary precision.
However, universality is not directly applicable to neural networks that incorporates invariances of the data \citep{bronstein_geometric_2021}, since they necessarily act by identifying pairs of inputs that are equivalent under the given set of transformations.
This feature creates a complex interaction between the network's ability to discriminate different input data, and the invariant or equivariant structure that they are trying to preserve.
Assessing expressivity thus requires first a fine-grained analysis of the separation power of these families of neural networks, namely their capacity of distinguishing distinct inputs, which is a necessary condition for the universality of the models \citep{joshi_expressive_2023}.

In the graph learning community, which is a paramount domain where invariant and equivariant models are studied \citep{maron_invariant_2018, puny_equivariant_2023, bevilacqua_equivariant_2022}, networks are required to be invariant or equivariant under the group of permutations of the graph's nodes. 
In this domain, the primary methods for comparing separation power are the Weisfeiler-Leman (WL) isomorphism test \citep{weisfeiler_reduction_1968} and homomorphism counting \citep{lovasz_large_2012}.
Significant attention has been devoted to studying this property for graph learning models such as Graph Neural Networks (GNNs) \citep{scarselli_graph_2009, gori_new_2005, kipf_semi-supervised_2017}, Invariant Graph Networks (IGNs) \citep{maron_invariant_2018, maron_learning_2020}, and subgraph GNNs \citep{alsentzer_subgraph_2020, bevilacqua_equivariant_2022}.
However, the WL test and homomorphism counting, along with their variants, have severe limitations imposed by their combinatorial nature. 
In particular, recent research \citep{joshi_expressive_2023} has highlighted the necessity of developing expressivity measures applicable to models that process data beyond relational structures, such as geometric graphs.

In this paper we contribute to this effort by studying the separation power of a more general class of equivariant neural networks, which are not covered by previous results but are of significant practical interest.
Namely, we focus on the family of neural networks with regular convolutions \citep{cohen_group_2016}, i.e., networks with non-polynomial continuous point-wise activations, finite-dimensional 
representations, and equivariance with respect to the action of finite groups acting on representations as permutations. 
This class is rich enough to comprise many models of common interest, such as IGNs \citep{maron_invariant_2018}, \rev{Circular} Convolutional Neural Networks (\rev{Circular} CNNs) 
\rev{\citep{ravanbakhsh_universal_2020}}, and Icosahedral CNNs \citep{cohen_gauge_2019}, even if the proposed approach is not able to address some relevant equivariant models such as Clebsh-Gordan or polynomial approaches \citep{kondor_clebsch-gordan_2018, puny_equivariant_2023}.

Specifically, we precisely describe the set of input pairs identified by relevant families of neural networks. 
In contrast, other approaches, limited to IGNs and graph processing, provide only upper bounds on expressiveness \citep{geerts_expressive_2020} or lower bounds that require networks with large hidden feature widths \citep{maron_provably_2019}.
Additionally, we show how hyperparameter and architectural choices impact the separation power of equivariant neural network models, both in general settings and in specific cases of practical interest.

To study the separation power of relevant classes of equivariant networks, 
we show that the set of identified points corresponds to the set of common zeros of a modified set of networks (Section~\ref{section:set-up}). 
We characterize the set of input pairs identified by these families of neural networks by introducing an explicit formula which, remarkably, is recursive over the networks depth (Section~\ref{section:multilayer}). 
This result provides important insights into how different hyperparameters and architectural choices impact the design of practical equivariant neural network models.
In particular, we prove that any non-polynomial activation is equivalent in separation power, achieving the maximum separability for networks with a fixed architecture (Section~\ref{section:activations}).
We show that increasing depth enhances separation up to a certain depth, where separation power stabilizes (Section~\ref{section:depth}).
Furthermore, we prove that the multiplicity of the blocks in hidden representations or, equivalently, the width of invariant hidden features does not affect the separation power of the networks (Section~\ref{section:width}). 
We demonstrate that the separation power of different block types forms a hierarchy, corresponding to the partial ordering of sub-groups of the symmetry group with respect to which the model is equivariant (Section~\ref{sec:repres}).
We illustrate how these general results apply to practical models (Section~\ref{section:implications}). 
Specifically, we strengthen existing results by showing that a much broader class of IGNs matches the separation power of WL (Section~\ref{section:igns}). 
Then, we demonstrate that the separation power of circular CNNs depends on the filter size (Section~\ref{section:cnns}). 

All proofs are provided in the Appendix.

\paragraph{Contributions.} Our contributions can be summarized as follows: 
    \textbf{(i)} We address the separation power of equivariant neural networks by fully characterizing the set of points identified by networks with a fixed architecture (Proposition~\ref{prop:equivalence} and Theorem~\ref{th:main}).
    \textbf{(ii)} We prove that any \rev{continuous, real, element-wise,} non-polynomial activation is equivalent in separation power, achieving the maximum separability for networks with a fixed architecture (Theorem~\ref{th:activations}). 
    \textbf{(iii)} We show that increasing depth enhances separation power up to a specific threshold, beyond which it stabilizes (Theorem~\ref{th:depth_stabilization}). 
    \textbf{(iv)} We illustrate how block decomposition of layers influences separability (Theorem~\ref{th:width}) and how separation power is independent of invariant hidden features (Remark~\ref{rmk:multiplicity}). 
    Notably, this result implies that any $k$-IGN matches the separation power of $k$-WL, improving upon previous results that required IGNs to have large hidden feature widths \cite{maron_provably_2019}.
    \textbf{(v)} Finally, we show that the minimal components from this decomposition form a hierarchy in separation power (Theorem~\ref{th:inclusion}).

\section{Related Work}

Recently, equivariant deep learning models have gained popularity \citep{cohen_learning_2014, cohen_group_2016, cohen_steerable_2016}, being successful in diverse fields such as computer graphics \citep{qi_pointnet_2017}, galaxy morphology prediction \citep{dieleman_rotation-invariant_2015}, computational biology \citep{joshi_grnade_2024}, and computational chemistry \citep{chanussot_open_2021}.
In the case of permutation equivariance, often required in graph learning, the WL test has been adopted as the fundamental tool to measure the expressivity of GNNs \citep{xu_how_2019, morris_weisfeiler_2019} and has been used to derive upper bounds \citep{geerts_expressive_2021} and lower bounds \citep{maron_provably_2019} on the expressiveness of IGNs and GNNs \citep{geerts_expressiveness_2022}.
Recently, homomorphism counting has been proposed as a more fine-grained measure of expressivity for GNNs \citep{zhang_beyond_2024}, capable of assessing the separation power of subgraph GNNs and their variants \citep{alsentzer_subgraph_2020, frasca_understanding_2022, bevilacqua_equivariant_2022}.
Other works that are related to the study of neural network separability include specific universality results for equivariant networks \citep{maron_universality_2019, yarotsky_universal_2018, zhou_universality_2020, ravanbakhsh_universal_2020, keriven_universal_2019, dym_universality_2020}.
Instead, \cite{joshi_expressive_2023} addresses the problem of separability by generalizing the WL test from combinatorial structures to geometric graphs.
In this paper, we shift from graph and relational domains to the continuous domain, where WL-based approaches are inapplicable, and extend the study of separation power to a broader class of equivariant neural networks not explored in previous research but essential for practical applications.
Specifically, we examine neural networks utilizing regular $G$-convolutions \citep{cohen_group_2016}. This class encompasses several widely used models, such as IGNs \citep{maron_invariant_2018}, \rev{Circular} CNNs \rev{\citep{ravanbakhsh_universal_2020}}, and Icosahedral CNNs \citep{cohen_gauge_2019}.

\section{The Relevance of Separability}
\label{section:separability}


\subsection{Separation-constrained Universality}
The universality property of neural networks enables them to approximate any continuous function with arbitrary precision, meaning there exists a sequence of networks that converges pointwise to each continuous function. 
Equivariant neural networks are designed to handle target functions with specific structures, represented by transformations that recognize equivalent inputs. 
However, this characteristic necessitates a deeper examination of their separation power. 
The separation power $\rho(\cN)$ of a subset $\cN \subseteq \cC(X, Y)$ of continuous functions between topological spaces $X$ and $Y$ is defined as follows.


\begin{definition}
A function $f: X \rightarrow Y$ is said to \emph{separate} two points $\alpha, \beta \in X$ if $f(\alpha) \neq f(\beta)$. 
A family of functions $\cN$ from $X$ to $Y$ \emph{separates} $\alpha, \beta \in X$ if there exists a function $f \in \cN$ that separates $\alpha$ and $\beta$. 
If a function or a family of functions fails to separate two points, we say that it \emph{identifies} them.
The set of pairs of points that are identified by $\cN$ define on an equivalence relation 
\begin{equation}\label{eq:rho}
\rho(\cN) = \{ (\alpha, \beta) \in X \times X \, | \, f(\alpha) = f(\beta) \text{ for each } f \in \cN \}.
\end{equation}
\end{definition}
When working with spaces of neural networks, which we refer to as \emph{neural spaces} for brevity, their separation power transfers to the class of functions they can approximate, as shown by the following fact.
\begin{fact*}
Let $(f_n)_{n \in \N}$ be a sequence of functions in $\cN$ that converges pointwise to $f$. If $\alpha, \beta \in X$ such that $f_n(\alpha) = f_n(\beta)$ for all $n \in \N$, then $f(\alpha) = f(\beta)$.
\end{fact*}
In particular, $\cN$ cannot approximate with arbitrary precision functions beyond ones respecting $\rho(\cN)$, namely, $\cC_{\rho(\cN)}(X, Y) = \{ 
f \in \cC(X, Y) \mid f(\alpha) = f(\beta) \; \forall (\alpha, \beta) \in \rho(\cN) \}$. 
Understanding however if the entire set $\cC_{\rho(\cN)}(X, Y)$ can be approximated leads to the study of \emph{separation-constrained universality}.

Notably, \cite{maron_universality_2019} and \cite{ravanbakhsh_universal_2020} illustrate this phenomenon in the context of equivariant neural networks, which 
are proven to approximate any continuous equivariant function. 
However, their constructions involve intermediate representations of impractically large dimensions.
In contrast, \cite{geerts_expressive_2020} and \cite{maron_provably_2019} show that permutation equivariant networks commonly used in practice can approximate continuous permutation equivariant functions whose separation power is equivalent to the WL test.

In this work, \emph{we address the problem of characterizing $\rho(\cN)$ for relevant families of equivariant neural networks}, as it is \emph{necessary}, though not sufficient, to understand separation-constrained universality. 
Specifically, we focus on how hyperparameter and architecture choices influence separability, as we will discuss in Section~\ref{section:hyperparam-intro}.

\subsection{The Effect of Hyperparameters on Separability and Universality}
\label{section:hyperparam-intro}

From a practical viewpoint \rev{it} is fundamental to understand the hyperparameters and architecture choices that affect \rev{the} separation or approximation power of families of neural networks.
For example, \rev{IGN}'s separation and approximation power are influenced by two hyperparameters $(k, w)$, where $k$ represents the network's relational order and $w$ denotes the width of the final multi-layer perceptron. Informally, \cite{maron_universality_2019} showed that $\text{IGN} = \cup_{k, w} k\text{-IGN}_w$ is universal for continuous equivariant functions, while \cite{geerts_expressive_2020} proved that $k\text{-IGN} = \cup_w k\text{-IGN}_w$ is universal only within the class of equivariant functions in $\cC_{k\text{-WL}}(X, Y)$, the set of continuous equivariant functions with the same separation power as $k$-WL.
This example highlights that, in a general equivariant setting, two types of hyperparameters and architecture choices may exist: \textbf{(i)} those like $k$, which regulate the separation power and, hence, have a huge impact on approximation power, but also have a significant impact on the required computational resources, and \textbf{(ii)} hyperparameters like $w$, which do not affect separability but may impact separation-constrained approximation, often with a limited impact on computational resources. In our work, we aim to identify which hyperparameters and architecture choices control separation power, determining which belong to the first category and which may fall into the second.

\section{Preliminaries}
\label{sec:pre}

\subsection{Groups and Equivariance}
\label{sec:groups_and_equivariance}

We aim to define functions that are symmetric with respect to specific transformations. Groups, which are particularly useful for computation and technical manipulation, consist of transformations that fulfill certain criteria: the elements can be combined, each element has an inverse, and there is a neutral element with respect to composition. While group theory efficiently studies symmetries and transformations from a purely algebraic perspective, it needs to be adapted to the linear algebra framework required for defining neural networks. Representation theory acts as a dictionary for translating between these two languages, showing how abstract groups can be mapped to sets of matrices that themselves form groups. For an overview, see Appendix~\ref{section:preliminaries_appendix} or \cite{fulton_representation_2004} for a more complete reference.
We will primarily focus on permutation representations.
Before defining them, we need to introduce additional notation. Let $X$ be a finite set and $G$ a finite group acting on it. Let $\R^X$ denote the set of real-valued functions defined on $X$. 
For each $x \in X$, let $e_x \in \R^X$ be the function that takes the value $1$ at $x$ and $0$ everywhere else, and note that the set $\{ e_x \}_{x \in X}$ forms a basis for $\R^X$. 
A \emph{permutation representation} of $G$ is a linear action of $G$ on $V = \mathbb{R}^X$ such that for each $g \in G$ and $x \in X$, we have $g(e_x) = e_{gx}$. 
Letting $V$ and $W$ be permutation representations of $G$, we say that a function $\phi:V \to W$ is $G$-equivariant if $\phi(gv) = g\phi(v)$ for each $v \in V$ and $g \in G$.
We denote by $ \Hom(V, W) $ the set of linear maps between $V$ and $W$, and by $ \Hom_G(V, W) $ the subset of $ G $-equivariant linear maps. 
Similarly, we refer to the set of affine maps between $V$ and $W$ as $ \Aff(V, W) $, and the set of $G$-equivariant affine maps as $\Aff_G(V, W) $.
Note that $ \Hom(V, W) $, $ \Aff(V, W) $, and their equivariant counterparts are real vector spaces with respect to addition and scalar multiplication. 
Moreover, \cite{pacini_characterization_2024} prove that a map $f \in \Aff(V, W)$ can be uniquely decomposed as $\tau_v \smcirc \phi$ for some $v \in V$ and $\phi \in \Hom(V, W)$ and it is equivariant if and only if its linear part $\phi$ is equivariant and $v \in W^G = \{ v \in W \mid gv = v \; \forall g \in G \}$, the set of $G$-invariant vectors in $W$.
Thus, we have relevant linear morphisms $\lambda: \Aff_G(V, W) \to \Hom_G(V, W)$, which projects an affine map onto its linear part, and $\tau: \Aff_G(V, W) \to W^G$, which projects an affine map onto its translational part.

\subsection{Equivariant Neural Networks}
\label{sec:equi_nn}

With all the necessary definitions in place, we can now introduce the notion of equivariant neural network. Our study will focus on neural networks that are equivariant with respect to finite groups, have \rev{arbitrary} point-wise continuous activation functions, and whose representations are permutation representations.

\begin{definition}[Point-wise Activation]
    \label{def:point-wise-activation}
    Let $\R^X$ be a permutation representation of a group $G$, and let $\sigma: \R \to \R$. 
    We define the \emph{point-wise activation} induced by $\sigma$ as the function $\tilde \sigma: \R^X \to \R^X$ such that $\tilde \sigma (\sum_{x \in X} \alpha_x e_x) = \sum_{x \in X} \sigma(\alpha_x) e_x$.
    We will often abuse notation and refer to $\sigma$ as the activation function as well.
\end{definition}

\begin{definition}[Neural Networks and Neural Spaces]
    \label{def:nn_space}
    Let $G$ be a group and $V_0, \dots, V_d$ be permutation representations of $G$, and let $M_i$ be subsets of $\Aff_G(V_{i-1}, V_i)$ for $i = 1, \dots, d$.
    Given $d\geq 2$, the \emph{neural space} with layers in $M_1, \dots, M_d$ and activation $\sigma$ is the recursively defined set
    \[
    \cN_{\sigma}(M_1, \dots, M_d) = \left\{ \phi^d \smcirc \tilde \sigma \smcirc \eta^{d-1} \mid \phi^d \in M_d, \, \eta^{d-1} \in \cN_{\sigma}(M_1, \dots, M_{d-1}) \right\},
    \]
    where $\cN_{\sigma}(M_1)=M_1$.
    A \emph{neural network} with layers $M_1, \dots, M_d$ and activation $\sigma$ is an element $\eta^d\in \cN_\sigma(M_1, \dots, M_d)$.   
    When $M_i = \Aff_G(V_{i-1}, V_i)$ for each $i = 1, \dots, d$, we simply write $\cN_\sigma(V_0, \dots, V_d)$ instead of $\cN_\sigma(M_1, \dots, M_d)$.
\end{definition}

In Definition~\ref{def:nn_space}, we do not impose any additional structure on $M_i$ beyond it being a subset of $\Aff_G(V_{i-1}, V_{i})$, although we will primarily consider $M_i$ to be a vector subspace.

We now provide examples of relevant models that align with Definition~\ref{def:nn_space}.


\begin{example}[Equivariant Neural Networks]
    Let $G$ be a group, $V_i = \R^{X_i}$ be permutation representations of $G$, and $M_i = \Aff_G(V_{i-1}, V_i)$ the full space of equivariant affine maps from $V_{i-1}$ to $V_i$ for each $i = 1, \dots, d$. The neural space $\cN_{\sigma}(M_1, \dots, M_d)$, or equivalently $\cN_{\sigma}(V_0, \dots, V_d)$, is the usual space of equivariant neural networks with depth $d$ and representation spaces $V_i$ for each $i = 0, \dots, d$. 
\end{example}

As discussed in Section~\ref{sec:groups_and_equivariance}, equivariant affine maps $\Aff_G(V, W)$ can be decomposed into a linear part in $\Hom_G(V, W)$ and a translational part in the invariant subspace $W^G$, the set of $G$-invariant vectors of $W$.
In our setting, the symmetry group $G$ is finite, and $W$ is a permutation representation, with its invariant part $W^G$ characterized by the following result.

\begin{prop}
    \label{prop:characterization-trivial-subspace}
    Let $\R^X$ be a permutation representation of $G$ with orbit decomposition $X_1 \sqcup \cdots \sqcup X_n$ (see Definition~\ref{def:orbits} in Appendix~\ref{section:group-action}), let $Y \subseteq X$. Define $\mathbbm{1}_Y = \sum_{y \in Y} e_y \in \R^X$.
    The invariant subspace of $\R^X=\R^{X_1} \oplus \cdots \oplus \R^{X_n}$, consisting of vectors fixed by the action of $G$, is generated by the basis $\mathbbm{1}_{X_1}, \dots, \mathbbm{1}_{X_n}$.
\end{prop}

With this further characterization, we present detailed examples of equivariant affine maps and neural spaces relevant to machine learning applications, showing how common models can be expressed within this formalism.
Furthermore, we will revisit these neural spaces in Section~\ref{section:implications}, highlighting specific properties related to their separation power.

\begin{example}[Invariant Graph Networks]
\label{example:egn}

Invariant Graph Networks of order 2 (2-IGNs) \citep{maron_invariant_2018}  are neural network models that ensure equivariance with respect to node ordering, making them particularly effective for graph processing tasks.
They process graphs encoded as adjacency matrices $A$ in $\R^{n \times n \times f}$, where the first two dimensions encode the relational structure, and the third dimension corresponds to the features.
In our setting, these matrices are defined as elements in $\R^X \otimes \R^f$ where $X = [n] \times [n]$ and $\R^f$ is the invariant feature space. 
Each element $\sigma \in G=S_n$ acts on $X$ as $\sigma(i, j) = (\sigma(i), \sigma(j))$ for each $(i, j) \in X$ and trivially on $\R^f$. 
To achieve permutation equivariance, the layers of $2$-IGNs are maps in $\Aff_{S_n}(\R^X \otimes \R^{f_{i-1}}, \R^X \otimes \R^{f_{i}})$, and thus their associated neural spaces are $\cN_\sigma(\R^X \otimes \R^{f_0}, \dots, \R^X \otimes \R^{f_d})$.
Proposition~\ref{prop:invariant-in-morphisms} in Appendix~\ref{sec:representations} implies that understanding the structure of $\Aff_{S_n}(\R^X \otimes \R^{f_{i-1}}, \R^X \otimes \R^{f_{i}})$ reduces to understanding the structure of $\Aff_{S_n}(\R^X, \R^X)$, which is completely characterized in \cite{maron_invariant_2018} and \cite{pearce-crump_connecting_2023}, which states that $\dim \Hom_{S_n}(\R^X, \R^X) = 15$ for $n > 3$ and, in accordance with Proposition~\ref{prop:characterization-trivial-subspace}, bias terms can be described by noting that $X = [n] \times [n]$ splits into two orbits, $X_1 = \left\{ (i, i) \mid i \in [n] \right\}$ and $\quad X_2 = \left\{ (i, j) \mid i \neq j \right\}$.
Identifying $\R^{[n] \times [n]} = \R^n \otimes \R^n \cong \R^{n \times n}$, elements in $\R^{X_1}$ correspond to diagonal matrices in $\R^{n \times n}$, while elements in $\R^{X_2}$ are off-diagonal matrices. 
In particular, invariant vectors in $\R^{X_1}$ are linear combinations of
\[
\mathbbm{1}_{X_1} =
\begin{bmatrix}
    1 & 0 & 0 \\
    0 & 1 & 0 \\
    0 & 0 & 1 \\
\end{bmatrix}
\quad \text{ and } \quad
\mathbbm{1}_{X_2} = 
\begin{bmatrix}
    0 & 1 & 1 \\
    1 & 0 & 1 \\
    1 & 1 & 0 \\
\end{bmatrix}.
\]
Then,
\begin{equation*}
    \Aff_{S_n}(\R^X, \R^X) = \left\{ A \mapsto \sum_{i = 1}^{15} x_i \phi^i(A) + y_{1} \mathbbm{1}_{X_1} + y_{2} \mathbbm{1}_{X_2} \mid x_1, \dots, x_{15}, y_1, y_2 \in \R \right\},
\end{equation*}
where $\phi^1, \dots, \phi^{15}$ forms a basis of $\Hom_{S_n}(\R^X, \R^X)$.
Furthermore, \cite{maron_universality_2019} show that $2$-IGNs can be generalized to $k$-IGNs, which employ hidden representation of high order, namely elements in $\R^{[n]^k} \otimes \R^{f_i}$. 
\end{example}

\begin{example}[\rev{Circular} Convolutional Neural Networks]
\label{example:1d-conv}
\rev{Circular} convolutional filters can be described in the context of permutation representations, as we will demonstrate using the 1-dimensional case for simplicity.
Let $ X = [n] $ with the cyclic group $G=\mathbb{Z}_n $ acting on it in the standard way, and identify $ \mathbb{R}^{[n]} = \mathbb{R}^n $. 
Linear maps in $\Hom_{\Z_n}(\R^n, \R^n)$ are circulant matrices C(x) associated with a vector $x = (x_1, \dots, x_n) \in \R^n$ \citep{davis_circulant_1979}. Hence, the linear part and the translational part of a map in $\Aff_{\Z_n}(\R^n, \R^n)$ are written as  follows:
\[
C(x) = 
\begin{bmatrix}
x_1 & x_n & x_{n-1} & \cdots & x_2 \\
x_2 & x_1 & x_n & \cdots & x_3 \\
x_3 & x_2 & x_1 & \cdots & x_4 \\
\vdots & \vdots & \vdots & \ddots & \vdots \\
x_n & x_{n-1} & x_{n-2} & \cdots & x_1 \\
\end{bmatrix} 
\text{ and }
y \mathbbm{1}_{X} = y \mathbbm{1}_{[n]} = y 
\begin{bmatrix}
    1 \\
    \vdots \\
    1 \\
\end{bmatrix}.
\]
Note that $C(x) = \sum_{i = 1}^n x_i C(e_i)$, where $e_1, \dots, e_n$ is the canonical basis of $\R^n$.
Therefore, since convolutional filters with limited support are more commonly used in practice, it is appropriate to restrict our choice of layers to maps within
\[
M^k = \left\{ v \mapsto \sum_{i = 1}^k x_i C(e_i)v +  y \mathbbm{1}_{[n]} \mid x_1, \dots, x_k, y \in \R \right\},
\]
which are the $1$-dimensional counterpart of the $k \times k$ 2D convolutional filters widely used in computer vision. 
In this case, the corresponding neural space will be $\cN_\sigma(M^{k_1}, \dots, M^{k_d})$ for appropriate choices of filter sizes $1 \leq k_1, \dots, k_d \leq n$.
\end{example}

To generalize previous examples, we now assume that the layer spaces $M$, which are subspaces of $\Aff_G(V, \R^X)$, can be written as
\[
M = \left\{ 
    v \mapsto 
    \sum_{i = 1}^ k x_i \phi^i(v) + \sum_{P \in \cP} y_P \mathbbm{1}_{P} \mid x_1, \dots, x_k, y_P \in \R \text{ for all } P \in \cP
    \right\},
\]
where $\phi^1, \dots, \phi^k$ generate a subspace of $\Hom_G(V, \R^X)$, and $\cP$ is a partition of $X$, that may either combine several orbits from the orbit partition $X = X_1 \sqcup \cdots \sqcup X_n$ into larger subsets, or coincide with the orbit partition itself. 
For further technical details, we refer the interested reader to Definition~\ref{def:complete-bias} in Appendix~\ref{section:proof-main-th}.

Now that we have described the structure of layer spaces in detail and explained why neural spaces constructed from these layer spaces more accurately reflect specific architectures used in practice, we can proceed to study the separation power of these neural spaces.

\section{Main Results}
\label{section:main}

We begin by describing and formulating the twin network trick in Section~\ref{section:set-up}, which serves as the primary tool for converting a separation problem into a zero locus problem, to be addressed informally in Section~\ref{section:multilayer}.
In the subsequent sections, we will explore the implications of this result and how it can be applied to effectively compare the separation power of different neural spaces.

\subsection{The Twin Network Trick}
\label{section:set-up}

In this section, we introduce the twin network trick, which transforms a network separation problem into a zero locus problem for neural networks. This allows us to apply the recursive techniques for solving zero locus problems developed in Section~\ref{section:multilayer}. Specifically, a zero locus problem involves identifying all points that are mapped to zero by all networks within a given neural space.

More precisely, the identification equivalence relation in~\labelcref{eq:rho} can be reformulated as the following zero locus problem: $(\alpha, \beta) \in \rho(\cN_\sigma(M_1, \dots, M_d))$ if and only if
\begin{equation}   
\label{eq:2var}
\eta(\alpha) - \eta(\beta) = 0 \;\;\forall \eta \in \cN_\sigma(M_1, \dots, M_d).
\end{equation}
\begin{wrapfigure}{r}{0.5\textwidth}
    \vspace{-0.7cm}
    \label{fig:twin-network}
    \centering
    \includegraphics[scale=0.5]{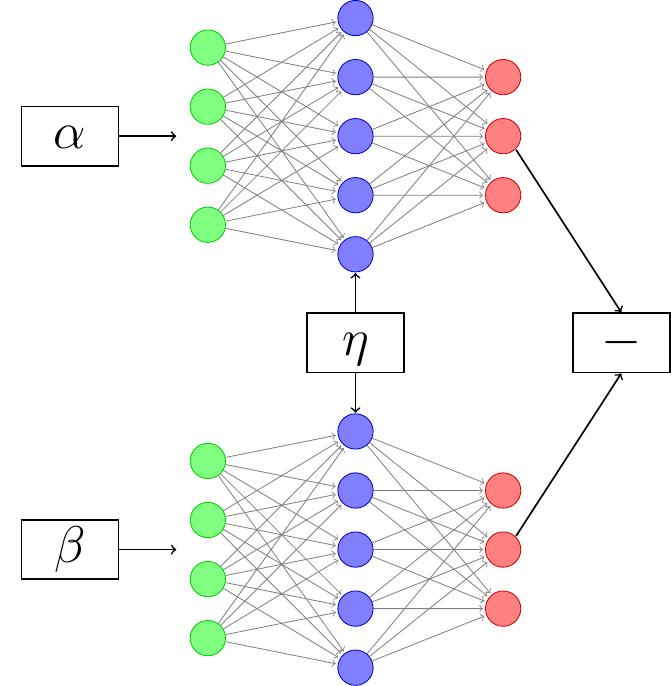}
    \caption{The twin network trick illustrated. Evaluating two copies of $\eta$ on $\alpha$ and $\beta$, and subtracting the resulting outputs, is equivalent to evaluating the twin network $\overline \eta$ on $(\alpha, \beta)$.}
\end{wrapfigure}
We observe that \labelcref{eq:2var} reduces to a zero locus problem involving \emph{twin networks}. 
The twin network $\overline{\eta}: V \oplus V \rightarrow W$, where $\overline{\eta} (\alpha, \beta) = \eta(\alpha) - \eta(\beta)$,
is itself a neural network with the same depth as $\eta$ but with a different architecture. 
Namely, $\overline \eta\in\cN_\sigma(\overline M_1, \dots, \overline M_{d-1},  M'_d)$ where we define $M'_d = \{ (\alpha, \beta) \mapsto \phi(\alpha) - \phi(\beta) \, | \, \phi \in M_d \}$ and 
\begin{equation*}
\overline M_i = \left\{ \overline \phi: 
(\alpha, \beta) \mapsto (\phi(\alpha), \phi(\beta))
 \, \big| \, \phi \in M_i \right\}.
\end{equation*}
Thanks to the definition of the twin network, we can restate the identification problem in Equation \ref{eq:2var} as an equivalent zero locus problem, where we aim to find all $\overline{\beta}$ in $V \oplus V$ that satisfy
\begin{equation*}
    \overline{\eta}(\overline{\beta}) = 0 \;\forall \overline{\eta} \in \cN_\sigma(\overline{M}_1, \dots, \overline{M}_{d-1}, M'_d).
\end{equation*}

In summary, these observations can be synthesized into the following proposition, which directly links the identification relation to a zero locus.

\begin{prop}
\label{prop:equivalence}
For a family $\cF$ of functions between a set $V$ and a vector space $W$, let
\begin{equation*}
\cI (\cF) = \{ \beta \in V \, | \, \eta(\beta) = 0 \,\forall \eta \in \cF \}.
\end{equation*}
be the \emph{zero locus} of $\cF$.
Then, for any neural space $\cN_\sigma (M_1, \dots, M_d)$, we have
\begin{align*}
\rho(\cN_\sigma (M_1, \dots, M_d)) =
\cI(\cN_\sigma(\overline M_1, \dots, \overline M_{d-1}, M'_d)).
\end{align*}
\end{prop}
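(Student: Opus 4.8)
The plan is to reduce the claimed set equality to a single chain of logical equivalences whose only nontrivial link is the identification of the family of twin networks with a fixed-architecture family in its own right. First I would start from the definition of $\rho$ in \eqref{eq:rho}, rewriting $(\alpha,\beta) \in \rho(\cN_\sigma(M_1,\dots,M_d))$ as the condition $\eta(\alpha) - \eta(\beta) = 0$ for every $\eta \in \cN_\sigma(M_1,\dots,M_d)$, exactly as in \eqref{eq:2var}. Using the defining relation \eqref{eq:mi_bar}, namely $\overline\eta(\alpha,\beta) = \eta(\alpha) - \eta(\beta)$, this condition becomes $\overline\eta(\alpha,\beta) = 0$ for every $\eta$, which, upon reading $(\alpha,\beta)$ as a single point of $V \oplus V$, is precisely the statement that it lies in the zero locus of the collection $\{\overline\eta : \eta \in \cN_\sigma(M_1,\dots,M_d)\}$.

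The key step, and the one I expect to carry the real content, is the \emph{set} equality
\[
\{\overline\eta : \eta \in \cN_\sigma(M_1,\dots,M_d)\} = \cN_\sigma(\overline M_1,\dots,\overline M_{d-1},M'_d),
\]
after which $\cI$ of the left-hand collection is literally $\cI$ of the family on the right, closing the argument. I would prove this by a layer-by-layer induction showing that the point-wise activation $\tilde\sigma$ respects the direct-sum decomposition $\R^{X_i} \oplus \R^{X_i} \cong \R^{X_i \sqcup X_i}$: since $\tilde\sigma$ acts coordinate-wise it never mixes the two copies, so composing it with the block-diagonal maps $\overline\phi_i = \mathrm{blockdiag}(\phi_i,\phi_i)$ propagates the pair $(\alpha,\beta)$ through the first $d-1$ layers as two independent channels, producing exactly $(\eta_{\le i}(\alpha),\eta_{\le i}(\beta))$ at stage $i$, where $\eta_{\le i}$ denotes the truncation of $\eta$. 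Applying the final layer $\psi \in M'_d$, defined by $(x,y) \mapsto \phi_d(x) - \phi_d(y)$, then yields $\eta(\alpha) - \eta(\beta)$, confirming that $\overline\eta$ is a member of $\cN_\sigma(\overline M_1,\dots,\overline M_{d-1},M'_d)$ with the claimed architecture.

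For the reverse inclusion I would take an arbitrary $\zeta \in \cN_\sigma(\overline M_1,\dots,\overline M_{d-1},M'_d)$ with layers $(\overline\phi_1,\dots,\overline\phi_{d-1},\psi)$; by the very definitions of $\overline M_i$ and $M'_d$, each $\overline\phi_i$ equals $\mathrm{blockdiag}(\phi_i,\phi_i)$ for some $\phi_i \in M_i$ and $\psi$ equals $(x,y)\mapsto\phi_d(x)-\phi_d(y)$ for some $\phi_d \in M_d$, so the network $\eta$ assembled from $(\phi_1,\dots,\phi_d)$ satisfies $\overline\eta = \zeta$ by the same propagation argument. Combining the two inclusions gives the set equality, and the chain of equivalences then yields the proposition. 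The only place demanding genuine care is the inductive propagation claim: one must verify that the \emph{shared} affine parameters $\phi_i$ drive both channels and that the activation's coordinate-wise action does not couple them, which is exactly what the block-diagonal form of $\overline\phi_i$ guarantees.
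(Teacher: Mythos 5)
Your proposal is correct and follows essentially the same route as the paper: the paper's Proposition~\ref{prop:equivalence} is presented as a direct synthesis of the twin-network discussion, i.e., rewriting $(\alpha,\beta) \in \rho(\cN_\sigma(M_1,\dots,M_d))$ as the zero-locus condition $\overline\eta(\alpha,\beta)=0$ and identifying the collection of twin networks with the family $\cN_\sigma(\overline M_1,\dots,\overline M_{d-1},M'_d)$. Your layer-by-layer verification of the two inclusions (block-diagonal maps and point-wise activations never couple the two channels, and conversely every member of the twin family arises from shared parameters $\phi_i \in M_i$) simply makes explicit the set equality that the paper asserts informally.
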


Our task, then, is to determine the zero locus corresponding to the neural space of twin networks.

\subsection{The Characterization Theorem}
\label{section:multilayer}

In the previous sections, thanks to Proposition~\ref{prop:equivalence}, we have translated the problem of computing the identification relation $\rho(\cN_\sigma(M_1, \dots, M_d))$ into the problem of computing the zero locus $\cI(\cN_\sigma(\overline{M}_1, \dots, \overline{M}_{d-1}, M'_d))$.
More generally, this zero locus can be determined using the recursive formula proposed in Theorem~\ref{th:main}. For brevity, we provide an informal version here, and refer the interested reader to Theorem~\ref{th:main-formal} in Appendix~\ref{section:proof-main-th} for the complete version.

\rev{

We begin by recalling and defining the necessary notation to state Theorem~\ref{th:main}. 
Let $M_i$ be vector subspaces of $\Aff_G(\R^{X_{i-1}}, \R^{X_{i}})$ for $i = 1, \dots, d$. 
Recall that $\lambda(M_d)$ denotes the linear part of $M_d$, and let $\phi^{d, 1}, \dots, \phi^{d, s_d}$ be linear maps spanning $\lambda(M_d)$, and recall that $\tau(M_d) = \langle \mathbbm{1}_P \rangle_{P \in \cP}$ for some partition $\cP$ of $X_{d}$.
Let $\cQ$ be another partition of $X_{d}$; if $\cQ$ is finer than $\cP$ we indicate this relationship as $\cQ \leq \cP$.
Furthermore, for each $h = 1, \dots, s_d$ and $k \in X_d$ define the family of partitions of $X_d$
\[
\Psi_{h, k} = \left\{  \cQ \leq \cP  \, |  \, \sum_{i \in P}\phi^{d, h}_{ki} = 0, \, \forall P \in \cQ \right\}.
\]
Let $\pi_i: \R^{X_{d-1}} \rightarrow \R$ denote the projection onto the $i$-th component of $\R^{X_{d-1}}$ for each $i$ in $X_{d-1}$.
For each $i, j \in X_{d-1}$, define the set $(M_{d-1})_{ij} = \{\phi':x \mapsto \pi_i \phi(x) - \pi_j \phi(x) \, | \, \phi \in M \}$ which represents scalar-valued layers obtained as the differences between the $i$-th and $j$-th part of the $(d-1)$-th layer.

\begin{theorem}[Informal]
\label{th:main}
Using the notation defined above, if $\sigma$ is a non-polynomial continuous activation function, the following formula, recursive with respect to the depth $d$, holds
\begin{equation}
    \label{eq:fundamental}
    \cI(\cN_\sigma(M_1, \dots, M_d)) =  \bigcap_{h, k} \bigcup_{\cQ \in \Psi_{h, k}} 
    \bigcap_{\substack{P \in \cQ \\ i, j \in P}} 
    \cI(\cN_\sigma(M_1, \dots, M_{d-2}, (M_{d-1})_{ij})).
\end{equation}
\end{theorem}

The theorem shows that the zero locus of a neural space of depth $d$ can be recursively computed as a combination of unions and intersections of the zero loci of neural spaces of depth $d-1$. At depth $1$, the neural space reduces to a subspace of affine maps, and finding its zero locus corresponds to solving a system of linear equations.}
Although the actual execution of Formula~\ref{eq:fundamental} requires superpolynomial time, this recursive approach is particularly useful for deriving key properties of the identification relation, such as the role of activations, depth, and hidden features on the separation power, as detailed in the following sections.

\subsection{The Role of Activations}
\label{section:activations}

The following result shows that the choice of the activation function\rev{--and its properties, such as injectivity or monotonicity--}is irrelevant to separability, as long as the activation is non-polynomial.

\begin{theorem}
    \label{th:activations}
    Let $\sigma$ and $\tau$ be two continuous activation functions, with $\sigma$ being non-polynomial. Then
    \[
    \rho(\cN_\sigma(M_1, \dots, M_d)) \subseteq \rho(\cN_\tau(M_1, \dots, M_d)).
    \]
    If $\tau$ is also non-polynomial, equality holds. 
    Thus, non-polynomial activations not only yield equivalent separability but also achieve maximal separation power.
\end{theorem}

\begin{proof}[Proof outline]
\rev{
    The inclusion follows by reconstructing the steps of in the proof of Theorem~\ref{th:main} and applying the first part of Theorem~\ref{th:kiss-fundamental}.
    To prove the equality in the case $\tau$ is also non-polynomial, we reduce to solve the equivalent zero-locus problem thanks to Proposition~\ref{prop:equivalence}.
    We proceed by induction on $d$ to show that non-polynomial activation functions do not affect the zero-locus.
    For the base case $d = 1$, we have $\cI(\cN_\sigma(M_1)) = \cI(M_1)$, which is independent of $\sigma$. 
    Now, assume as the inductive hypothesis that $\cI(\cN_\sigma(M_1, \dots, M_{d-1}))$ is independent of $\sigma$ for any sequence $M_1, \dots, M_{d-1}$. 
    Additionally, by definition, $h$, $k$ and $\Psi_{h, k}$ are also independent of $\sigma$, which proves that
    none of the terms in the right-hand side of \labelcref{eq:fundamental} depend on $\sigma$, thereby completing the proof.}
\end{proof}
 
While non-polynomiality is sufficient for maximal separation, some polynomial activations may also achieve maximal separation power.
Identifying these polynomials\rev{--or simply some of their properties, such as their degree--}remains a complex mathematical problem.
For more details, see \cite{kiss_linear_2014}.

\subsection{The Role of Depth}
\label{section:depth}

Depth is a key hyperparameter influencing the separation power of neural spaces. Theorem~\ref{th:depth_stabilization} shows that, while adding layers of the same type can initially enhance separation power, this effect stabilizes after a finite number of layers.

\begin{theorem}
    \label{th:depth_stabilization}
    Let $M_i$ be a subspace of $\Aff_G(V_{i-1}, V_i)$ for each $i = 1, \dots, d$. Suppose that $V_{h-1} = V_h$ for some integer $1 \leq h \leq d$ and that $id_{V_h} \in M_h$. If $\sigma$ is a continuous non-polynomial activation function, then for $m \leq n$,
\begin{align*}
&\rho(\cN_\sigma(M_1, \dots, M_{h-1}, \underbrace{M_h, \dots, M_h}_{n \text{ times}}, M_{h+1}, \dots, M_d)) \subseteq \\
    &\qquad\subseteq\rho(\cN_\sigma(M_1, \dots, M_{h-1}, \underbrace{M_h, \dots, M_h}_{m \text{ times}}, M_{h+1}, \dots, M_d)),
\end{align*}
but there exists a repetition threshold $R$ such that for all $n, m \geq R$ the inclusion becomes an equality.
\end{theorem}

\begin{proof}[Proof outline]
\rev{
    To prove the first part of the statement, it suffices to show the inclusion for $n = 1$ and $m = 2$.
    Moreover, Theorem~\ref{th:activations} implies that it is sufficient to prove this inclusion for a single non-polynomial $\sigma$. Therefore, let $\sigma$ be the $\ReLU$ activation function, noting that in this case $\sigma\smcirc \sigma = \sigma$; equivalently $\tilde \sigma = \tilde \sigma \smcirc \tilde \sigma = \tilde \sigma \smcirc id_{\R^{X_i}}\smcirc \tilde \sigma$.
    Thus, each neural network $\phi_d \smcirc \, \tilde \sigma \smcirc \cdots\smcirc \tilde \sigma \smcirc \, \phi_1$ in $\cN_\sigma(M_1, \dots, M_h, \dots, M_d)$ can be written as
    $\phi_d \smcirc \cdots \smcirc \tilde \sigma \smcirc id_{\R^{X_i}}\smcirc \tilde \sigma \smcirc \cdots\smcirc \tilde \sigma \smcirc \, \phi_1$,
    which is an element of $\cN_\sigma(M_1, \dots, M_h, M_h, \dots,  M_d)$, thereby proving the desired inclusion by applying Lemma~\ref{lemma:subset}.
    The stabilization property can be proved by reducing to the equivalent zero-locus problem and observing that descending sequences of finite intersections and unions of a finite number of sets, as given by Theorem~\ref{th:main}, stabilize thanks to Lemma~\ref{lemma:boolean-lattice}.
}
\end{proof}

The repetition threshold may vary depending on the model and representation. For example, $k$-IGNs, being equivalent to $k$-WL, have a repetition threshold proportional to that of $k$-WL itself \citep{maron_provably_2019, geerts_expressive_2020}. In contrast, the following proposition shows an example of stabilization after just one repetition.

\begin{prop}
\label{prop:depth-fast-stab}
When the hidden representation spaces are regular representations, stabilization occurs after one layer repetition.
Namely, $\rho(\cN_\sigma(V, \R^G, \dots, \R^G, \R^{G/H})) = \rho(\cN_\sigma(V, \R^G, \R^{G/H}))$.
\end{prop}


\subsection{The Role of Intermediate Representations}
\label{section:width}
In this section, we show that if a representation, $V$ can be decomposed as $V' \oplus V''$, then the separation power of neural spaces with hidden representation $V$ reduces to the combined separation power of two distinct neural spaces with hidden representations $V'$ and $V''$.
In this section, we present Theorem~\ref{th:width}\rev{, an additional application of Theorem~\ref{th:main},} which demonstrates that the identification equivalence relation for neural spaces defined on $V$ is the intersection of those for neural spaces defined on $V'$ and $V''$.
This implies that by decomposing each hidden representation V into a sum of \emph{minimal} factors, the study of the separation power of general neural spaces can be reduced to analyzing those defined on minimal representations, as will be explored in Section~\ref{sec:repres}.

\begin{theorem}
\label{th:width}
Let $\sigma: \R \to \R$ be a continuous non-polynomial activation function, and let $V_0, \dots, V_d$ be permutation representations with $V_i = V'_i \oplus V''_i$ for some $0 \leq i \leq d$. 
Then,
\[
    \rho (\cN_\sigma(V_0, \dots, V_d)) =
    \rho (\cN_\sigma(V_0, \dots, V'_i, \dots, V_d)) \cap \rho (\cN_\sigma(V_0, \dots, V''_i, \dots, V_d)).
\]
\end{theorem}


\begin{remark}
\label{rmk:multiplicity}
    Note that if $V'_i = V''_i$, we have
    \begin{equation}
        \label{eq:rep-multiplicity}
        \rho (\cN_\sigma(V_0, \dots, V'_i \oplus V''_i, \dots, V_d)) = \rho (\cN_\sigma(V_0, \dots, V'_i, \dots, V_d)).
    \end{equation}
    As a result,
    \begin{equation*}
        \rho (\cN_\sigma(V_0, \dots, V_i \otimes \R^f, \dots, V_d)) = \rho (\cN_\sigma(V_0, \dots, V_i, \dots, V_d)),
    \end{equation*}
    since $V_i \otimes \R^f \cong V_i^{\oplus f}$.
    Thus, the separability is independent of multiplicity and invariant features in intermediate representations.
\end{remark}

\subsection{The Role of Representation Type}
\label{sec:repres}

Thanks to Theorem~\ref{th:width}, we can focus on studying the separation power of neural spaces defined on \emph{minimal} representations. 
These minimal representations are of the form $\R^X$, where the group $G$ acts transitively on $X$. 
That is, for any pair of points $x, y \in X$, there exists an element $g \in G$ such that $gx = y$. 
Basic group theory \citep{fulton_representation_2004} shows that a set with a transitive action is in bijective correspondence with right cosets $G/H$ for some subgroup $H < G$, see Definition~\ref{def:group-cosets} in Appendix~\ref{sec:group_theory}.
Informally, the following theorem allows us to compare representations induced by transitive actions arising from certain subgroups.

\begin{theorem}
    \label{th:inclusion}
    Let $K < H < G$ be finite groups. We have
    \[
        \rho(\cN_\sigma(V, \dots, \R^{G/K}, \dots, W)) \subseteq \rho(\cN_\sigma(V, \dots, \R^{G/H}, \dots, W)).
    \]
\end{theorem}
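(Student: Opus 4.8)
The plan is to prove the inclusion by establishing the \emph{reverse} inclusion at the level of function families and then invoking the order-reversing behaviour of $\rho$. Concretely, I would show that every network in $\cN_\sigma(V, \dots, \R^{G/H}, \dots, W)$ can be realized, as a function $V \to W$, by a network in $\cN_\sigma(V, \dots, \R^{G/K}, \dots, W)$, so that the former family is contained in the latter. Since $\rho(\cC_2) \subseteq \rho(\cC_1)$ whenever $\cC_1 \subseteq \cC_2$ as families of functions (requiring $f(\alpha) = f(\beta)$ over a larger family is a stronger constraint, directly from the definition of $\rho$), this containment yields precisely the claimed inclusion of identification relations, with the $K$-family on the smaller side.

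The bridge between the two representations is the canonical $G$-equivariant projection $p\colon G/K \to G/H$, $gK \mapsto gH$, all of whose fibers have cardinality $[H:K]$. It induces two $G$-equivariant linear maps: the pullback $\delta\colon \R^{G/H} \to \R^{G/K}$ determined by $(\delta v)_{g'K} = v_{g'H}$, so that $\delta v$ is constant along each fiber of $p$; and the fiber-sum $q\colon \R^{G/K} \to \R^{G/H}$ determined by $q(e_{g'K}) = e_{g'H}$. Equivariance of both is immediate from that of $p$, and one checks that $\delta$ maps invariant vectors to invariant vectors, so that precomposing or postcomposing with an equivariant affine map keeps us inside $\Aff_G$ (using the decomposition of affine equivariant maps into an equivariant linear part and an invariant bias recalled in Section~\ref{sec:pre}).

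The key identity, and the only place where the pointwise nature of the activation enters, is
\[
q \circ \tilde \sigma \circ \delta = [H:K]\, \tilde \sigma \qquad \text{on } \R^{G/H}.
\]
Indeed $\delta v$ is constant on fibers, $\tilde \sigma$ acts coordinatewise so $\tilde \sigma(\delta v)$ is still constant on each fiber with value $\sigma(v_{gH})$ over the fiber of $gH$, and summing the $[H:K]$ equal entries of that fiber produces the stated scalar multiple of $\tilde \sigma(v)$. Given $\eta \in \cN_\sigma(V, \dots, \R^{G/H}, \dots, W)$ with the $\R^{G/H}$ block sitting between $\phi_i$ and $\phi_{i+1}$, I would define $\eta'$ by replacing $\phi_i$ with $\delta \circ \phi_i$ and $\phi_{i+1}$ with $\phi_{i+1} \circ \tfrac{1}{[H:K]} q$, leaving every other layer untouched. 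Both new maps are affine and $G$-equivariant, hence lie in the full affine sets $\Aff_G$ defining $\cN_\sigma(V, \dots, \R^{G/K}, \dots, W)$, and the identity above makes $\delta$ and $q$ cancel across the activation so that $\eta' = \eta$ as functions $V \to W$.

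I expect the main obstacle to be bookkeeping rather than anything conceptual: one must verify carefully that the substitution $\phi_i \mapsto \delta \circ \phi_i$, $\phi_{i+1} \mapsto \phi_{i+1} \circ \tfrac{1}{[H:K]} q$ preserves equivariance of the biases (the bias of $\delta \circ \phi_i$ is $\delta$ applied to the invariant bias of $\phi_i$, hence invariant since $\delta$ is equivariant), and that the cancellation $q \circ \tilde \sigma \circ \delta = [H:K]\,\tilde \sigma$ correctly absorbs the scalar so that the bias $b_{i+1}$ of $\phi_{i+1}$, added only after the fiber-sum, is reproduced unchanged. Once $\eta' = \eta$ is confirmed for an arbitrary $\eta$, the inclusion $\cN_\sigma(V,\dots,\R^{G/H},\dots,W) \subseteq \cN_\sigma(V,\dots,\R^{G/K},\dots,W)$ of function families holds, and the order-reversing property of $\rho$ closes the argument.
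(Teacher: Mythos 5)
Your proposal is correct and follows essentially the same route as the paper's proof in Appendix \ref{sec:immersion-theorem}: the paper's maps $\iota$ and $\pi$ are exactly your $\tfrac{1}{[H:K]}\delta$ and $q$, and its decomposition $\iota \tilde\sigma \pi = \tfrac{1}{s}\,\tilde\sigma_K \circ \alpha$ is your identity $q \circ \tilde\sigma \circ \delta = [H:K]\,\tilde\sigma$ with the scalar absorbed into the post-activation layer. Both arguments establish the family inclusion $\cN_\sigma(V,\dots,\R^{G/H},\dots,W) \subseteq \cN_\sigma(V,\dots,\R^{G/K},\dots,W)$ and conclude by the order-reversing property of $\rho$ (Lemma \ref{lemma:subset}).
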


\begin{proof}[Proof outline]
    \rev{
        The proof consists in showing that $\R^{G/H}$ is a sub-representation of $\R^{G/K}$, and proving that this induces an embedding of $\cN_\sigma (V, \dots, \R^{G/H}, \dots, W)$ into $\cN_\sigma (V, \dots, \R^{G/K}, \dots, W)$. Consequently, the result follows directly from  Lemma~\ref{lemma:subset}.
    }
\end{proof}

Theorem~\ref{th:inclusion} implies that neural spaces with minimal representations in one layer, namely $\{ \cN_\sigma(V, \dots, \mathbb{R}^{G/H}, \dots, W) \}_{H < G}$, form a separation power hierarchy corresponding to the hierarchy of subgroups of $G$.
In particular, if $H = G$, the corresponding representation $\mathbb{R}^{G/G}$ has minimal separation power. 
Furthermore, notice that $\mathbb{R}^{G/G} \cong \R$ is the trivial representation, and this means that invariant layers have the lowest separation power.
On the other hand, if $H = \{ e \}$ is the group containing only the identity element, the corresponding representation $\mathbb{R}^{G/\{e\}}$ has maximal separation power, since $\{ e \}$ is contained in every subgroup of $G$. 
As $\mathbb{R}^{G/\{e\}} \cong \mathbb{R}^G$ is the regular representation, this implies that the regular representation achieves the maximum separation power.
\rev{In general, if $K < H$, then $
\dim \R^{G/H} < \dim \R^{G/K}$. Hence, improving separability requires working in a larger space, which, aside from ad-hoc optimizations, leads to additional computational cost. 
}
In particular, by applying Theorem~\ref{th:main}, we can prove the following proposition.

\begin{prop}
    \label{prop:regular_hidden}
    The neural space $\cN_\sigma(V, \R^G, \R^{G/H})$ of equivariant shallow networks with regular hidden representations identifies inputs if and only if they belong to the same $H$-orbit, i.e., $(\beta, \beta') \in \rho (\cN_\sigma(V, \R^G, \R^{G/H}))$ if and only if there exists some $h \in H$ such that $h \beta = \beta'$.    
\end{prop}


This is consistent with the results in \cite{ravanbakhsh_equivariance_nodate}, which demonstrate that shallow networks with hidden representation blocks isomorphic to $\mathbb{R}^G$ are universal. This universality implies maximal separation power, as stated in Theorem 16 of \cite{joshi_expressive_2023}.

\section{Implications on Practical Models}
\label{section:implications}

\subsection{Invariant Graph Networks}
\label{section:igns}


Theorem 1 in \citep{maron_provably_2019} and Theorem 2 in \citep{geerts_expressive_2020} together imply the following fundamental result for the theory of IGNs.

\begin{prop}
\label{prop:maron-geerts}
There exist $d > 0$ and a \textbf{large} $F > 0$ such that for hidden feature dimensions $f_1, \dots, f_d > F$, the neural space 
$\cN_\sigma ( (\R^n)^{\otimes 2} \otimes \R^{f_0}, (\R^n)^{\otimes k} \otimes \R^{f_1}, \dots, (\R^n)^{\otimes k} \otimes \R^{f_d}, \R)$
matches the separation power of $k$-WL.  
\end{prop}

However, Remark \ref{rmk:multiplicity} shows that the dimension of hidden invariant features does not affect separation power, strengthening Proposition~\ref{prop:maron-geerts} in the following corollary.

\begin{corollary}
    \label{cor:WL-eq}
    There exist $d > 0$ such that for \textbf{any} hidden feature dimensions $f_1, \dots, f_d > 0$, the neural space $\cN_\sigma ( (\R^n)^{\otimes 2} \otimes \R^{f_0}, (\R^n)^{\otimes k} \otimes \R^{f_1}, \dots, (\R^n)^{\otimes k} \otimes \R^{f_d}, \R)$
    matches the separation power of $k$-WL.  
\end{corollary}




\subsection{Convolutional Neural Networks}
\label{section:cnns}

The separation power of circular CNNs is influenced by the width of the filter's support.

\begin{prop}
    \label{prop:cnns}
    Let $M^k$ be the layer space for circular convolutions with filter size $k$, as defined in Example~\ref{example:1d-conv}. Consider the neural space $k\text{-CNN} = \cN_\sigma(M^{k}, \Aff_{\Z_n}(\R^n, \R))$.
    This is the space associated with shallow convolutional networks, where the first layer consists of one filter of size $k$ followed by an output invariant layer. For $n > 2$, we have:
    \[
        \rho(n\text{-CNN}) \subsetneq \rho(1\text{-CNN}), \quad \text{ and } \quad \rho(n\text{-CNN}) \subseteq \cdots \subseteq \rho(2\text{-CNN}) \subseteq \rho(1\text{-CNN}).
    \]
\end{prop}

\section{Limitations}
\label{section:limitation}
    The primary limitations of the proposed framework lie within its initial assumptions. 
    Specifically, it only applies to permutation representations and cannot extend to other important equivariant models, such as Clebsch-Gordan or polynomial approaches \citep{kondor_clebsch-gordan_2018, puny_equivariant_2023}.
    While the techniques used to prove Theorem~\ref{th:main} could be applied, the functional equation \labelcref{eq:main-result} necessary for the proof is substantially different, and no actionable solution is known up to our knowledge. 
    Solving these equations would improve our understanding of the separation power in these models.
    The second key assumption is that we consider only intermediate layers with bias, which is standard in many practical models. In cases where bias terms are absent, such as in some GNN models, our model can only provide a bound on the separation power. Indeed, these models can be reformulated as IGNs \citep{maron_invariant_2018}, which belong to a broader neural space with bias terms, allowing us to analyze their separation power.
    Our final assumption involves the use of non-polynomial activation functions, commonly employed with examples like $\ReLU$, $\tanh$, and sigmoid. While non-polynomiality is sufficient for separation, some polynomial activations may also achieve maximal separation power. However, identifying these specific polynomials presents a complex mathematical challenge. For further details, we refer readers to \citep{kiss_linear_2014}.

\section{Conclusions}
\label{section:conclusion}

The proposed results enhance our understanding of which target functions can be approximated with arbitrary precision by functions in neural network spaces relevant to practical applications. In particular, these target functions must satisfy the identification relation of neural networks spaces, which can now be computed using Theorem~\ref{th:main}. 
This result helps us classify hyperparameters and architectural choices into two categories: those that directly influence separation power, significantly impacting both approximation ability and computational cost, and those that do not affect separation power but may influence approximation in a separation-constrained setting, potentially with minimal computational cost. Specifically, we prove that all non-polynomial activations provide maximum separation power, depth enhances separation up to a stabilization point, and hidden feature width has no effect on separation power. Lastly, we show how these insights can be applied to practical models, such as IGNs and standard CNNs.

\newpage

\section{Acknoledgements}

Bruno Lepri acknowledges the support of the PNRR project FAIR - Future AI Research
(PE00000013), under the NRRP MUR program funded by the NextGenerationEU and the support
of the European Union’s Horizon Europe research and innovation program under grant agreement
No. 101120237 (ELIAS).


\newpage

\appendix

\newpage
\appendix

\section{Preliminaries}
\label{section:preliminaries_appendix}

\subsection{Spaces of Affine Transformations}
\label{section:affine-transforms}

Let $V$ and $W$ be vector spaces, we denote by $ \Hom(V, W) $ the set of linear maps between $V$ and $W$, and $\Aff(V, W)$ the set of affine maps between $V$ and $W$.
Note that both $\Hom(V, W) $ and $ \Aff(V, W) $ are real vector spaces with respect to addition and scalar multiplication.
For each $v \in W$ define $\tau_v: W \to W$ such as $\tau_v(w) = w + v$ for each $w$ in $W$. 
Each affine map $f: V \to W$ has a unique decomposition $\phi = \tau_v \smcirc \phi$, where $\phi$ is a linear map and $\tau_v$ is a translation by a vector $v \in W$.
Previous observations imply that the map
\[
\theta:
\begin{gathered}
    \Hom(V, W) \oplus W \to \Aff(V, W) \\
    (\phi, v) \mapsto \tau_v \phi
\end{gathered}
\]
is an isomorphism of vector spaces.
Define $T_W = \{ \tau_v \mid v \in W \}$, as clearly $T_W \cong W$, we will often abuse notation identifying $T_W$ with $W$.
Then, we can define the maps
\[
\lambda:
\begin{gathered}
    \Aff(V, W) \to \Hom(V, W) \\
    f \mapsto f - f(0)
\end{gathered}
\quad \text{ and } \quad
\tau:
\begin{gathered}
    \Aff(V, W) \to W \\
    f \mapsto f(0).
\end{gathered}
\]
We call $\lambda(f)$ the \emph{linear part} of $f$ for each $f \in \Aff(V, W)$ and $\tau(f)$ the \emph{translational part} of $f$ for each affine map. 

\begin{prop}
    \label{prop:linearity_and_surj}
    Maps $\lambda$ and $\tau$ are linear and surjective.
\end{prop}

\begin{proof}
The projection on the linear part $\lambda$ is linear since for each $\alpha, \beta \in \R$ and $f, g \in \Aff(V, W)$
\begin{align*}
\lambda(\alpha f + \beta g) = (\alpha f + \beta g) - (\alpha f + \beta g)(0) = \alpha f + \beta g - \alpha f (0) - \beta g(0) = \\ 
\alpha (f - f(0)) - \beta (g - g(0)) =  \alpha \lambda(f) + \beta \lambda(g).
\end{align*}
Furthermore, $\lambda$ is surjective since for each $\phi \in \Hom(V, W)$, we have that $\lambda (\phi) = \phi - \phi(0) = \phi$.
The projection on the translational part $\tau$ is linear since for each $\alpha, \beta \in \R$ and $f, g \in \Aff(V, W)$
\[
\tau(\alpha f + \beta g) = (\alpha f + \beta g)(0) = \alpha f (0) + \beta g(0) = \alpha \tau(f) + \beta \tau(g).
\]
Map $\tau$ is also surjective since fix $f \in \Hom(V, W)$ and notice that $\tau(\tau_{v}f) = v$ and $\tau_v f \in \Aff(V, W)$ for each $v \in W$.   
\end{proof}

In general, we will be interested in vector sub-spaces $M$ of $\Aff(V, W)$. A simple way to construct $M$ is to identify sub-spaces of interest $L$ in $\Hom(V, W)$ and subspaces $T$ of $T_W \cong W$ and define $M = \theta(L \oplus T)$. We will be interested only is sub-spaces which are possible to construct in this way. Nevertheless, it is worth noticing there exist sub-spaces of $\Aff(V, W)$ that are not isomorphic through $\theta$ to directed sums of linear and translation parts. An example is given by the space $M = \{ \tau_{\alpha v} \smcirc \alpha \phi \mid \alpha \in \R\} $. Notice that $\lambda(M) = \{ \alpha \phi \mid \alpha \in \R\} $, $\tau(M) = \{ \alpha v \mid \alpha \in \R\}$, and $\dim M =\lambda(M) = \tau(M) = 1$. If $M = \theta(\lambda(M) \oplus \tau(M))$, we should have $1 = \dim M = \dim \lambda(M) + \dim \tau(M) = 2$, which is not possible. Spaces that can be decomposed as $M = \theta(L \oplus T)$ have the following useful property.

\begin{prop}
    Let  $M$  be a subspace of $\Aff(V, W)$, where there exist a subspace  $L$  of $\Hom(V, W)$ and a subspace  $T$  of  $W$ such that $M = \theta(L \oplus T)$.
    Let $\phi_1, \dots, \phi_s$ be a basis for $L$ and $v_1, \dots, v_n$ a basis for $T$. 
    Then, each $f$ in $M$ can be written as
    \[
    f: \beta \mapsto \sum_{i = 1}^s x_i \phi_i(\beta) + \sum_{j = 1}^n y_j v_j,
    \]
    for unique parameters $x_i$ and $y_j$ for each $i = 1, \dots, s$ and $j = 1, \dots, n$.
    
\end{prop}

\subsection{Group Theory}
\label{sec:group_theory}

\begin{definition}
    \label{def_group}
    A \emph{group} is a pair $(G, \cdot)$ where $G$ is a set and $\cdot: G \times G \rightarrow G$ is a function satisfying the following axioms.
    \begin{itemize}
        \item \emph{Associativity:} for each $g, h, k \in G $ we have$(g \cdot h) \cdot k = g \cdot (h \cdot k)$.
        \item \emph{Identity:} there exists an element $e \in G$ such that $g \cdot e = e \cdot g = g$ for each $g \in G$.
        \item \emph{Inverse Element:} for each element $g \in G$, there exists an element $g^{-1} \in G$ such that $g \cdot g^{-1} = g^{-1} \cdot g = e$.
    \end{itemize}
    A group is \emph{finite} if it contains a finite number of elements. A group is \emph{abelian} or \emph{commutative} if $gh = hg$ for each $g, h \in G$.
\end{definition}

\begin{example}
    \label{example_groups}
    Here we present some fundamental examples of groups.
    \begin{itemize}
        \item Let $X$ be a set and define the set of permutation of $X$ as
        \[
        \cS_X = \{ f: X \to X \mid f \text{ is bijective}\}.
        \]
        With the composition operation form the \emph{symmetric group} or the \emph{permutation group} of $X$.
        Particular attention is devoted to the case $X = [n]$, we write $S_n = \cS_X$ and it called \emph{symmetric group} or the \emph{permutation group} of $n$ elements.
        \item Let $\Z_n$ be the group of integers modulo $n$ with the addition operation, they are called \emph{finite cyclic groups} of order $n$.
        \item Given two groups $G$ and $H$, the direct product $G \times H$ of them is still a group. The set of the elements is the Cartesian product of $G$ and $H$ while the sum is defined as
        \[
        (g_1, h_1) \cdot_{G \times H} (g_2, h_2) = (g_1 \cdot_{G} g_2, h_1 \cdot_{H} h_2).
        \]
    \end{itemize}
\end{example}

Now, we introduce the notion of group homomorphism, a transformation between groups which preserves the operation.
\begin{definition}
    \label{def_group_homomorphism}
    A \emph{group homomorphism} is a map
    \[
    \phi: G \rightarrow H
    \]
    between $G$ and $H$ groups such that, for each $g, h \in G$
    \[
    \phi(g \cdot h) = \phi(g) \cdot \phi(h) \text{.}
    \]
\end{definition}

\begin{definition}[Cosets]
    \label{def:group-cosets}
    Let $G$ be a group and $H$ be a subgroup of $G$. The \emph{set of left cosets} of $G$ by $H$ is the set $G/H = \{gH \mid g \in G\}$, where $gH = \{gh \mid h \in H\}$ are the \emph{left cosets} of $H$. Similarly, we define the set of \emph{right cosets} as $H \backslash G = \{Hg \mid g \in G\}$.
    Let $K$ be a second subgroup of $G$, we define the \emph{double coset} of $H$ and $K$ with respect to an element $g \in G$ as the set $HgK = \{ hgk \mid h \in H, k \in K \}$.
    The set of double cosets is denoted as $H \backslash G / K$.
\end{definition}

\begin{example}

Relevant examples of left cosets include the following:

\begin{enumerate}
    \item Consider $G = \Z$ and the subgroup $H = n\Z$ of integers multiples of $n$. The quotient $G/H$ is a group and is isomorphic to the cyclic group of $n$ elements, $\Z_n$.
    \item Consider $G = S_3$, the symmetric group on three elements, and the subgroup $H = \{(1), (12)\}$. The quotient $G/H$ is a group and is isomorphic to $S_2$, symmetric group on two elements.
    \item Consider $G = S_n$, the symmetric group on $n$ elements, and the subgroup $H = A_n$, the alternating group on $n$ elements. The quotient $G/H$ is a group and is isomorphic to $\Z_2$.
\end{enumerate}

\end{example}

\subsection{Group Actions and Equivariant Maps}
\label{section:group-action}

Let $G$ be a group and $ X $ be a set. An \emph{action} of the group $G$ on the set $ X $ is a function 
\[
\Phi: G \times X \to X,
\]
usually written as $\phi_g(x) = \Phi(g, x)$ for each $g$ in $G$ and $x$ in $X$, such that:

\begin{itemize}
    \item For the identity element $e$ in $G$, the \emph{identity condition} $\phi_e = id_X$ holds.
   \item For all $ g, h \in G $, the \emph{compatibility condition} $\phi_g \smcirc \phi_h = \phi_{gh}$ holds.
\end{itemize}

In this context, we often write $ g \cdot x $ or simply $gx$ instead of $ \phi_g(x) $.
A \emph{$ G $-set} is a set $ X $ equipped with a group action of $ G $. 
This means that there is a well-defined action $ \cdot: G \times X \to X $ satisfying the properties of a group action as described above. 
Throughout the following sections, it will often be convenient to decompose $G$-sets into a disjoint union of subsets, each minimal (in a sense specified in Definition~\ref{def:orbits}) and equipped with a compatible $G$-action.

\begin{definition}
    \label{def:orbits}
    Let $G$ be a group acting on a set $X$. An \emph{orbit} in $X$ is a subset $Y \subseteq X$ such that for each $x \in Y$, we have $Y = \{ g \cdot x \mid g \in G \}$.
    The set $X$ can be decomposed into a disjoint union of orbits under the action of $G$. This is called the \emph{orbit decomposition} of $X$, and if $X$ is finite, the decomposition can be written as
    \[
    X = X_1 \sqcup \cdots \sqcup X_n,
    \]
    where $X_1, \dots, X_n$ are the distinct orbits of $X$.
\end{definition}

Another fundamental concept for our treatment is that of a function between $G$-sets that preserves actions, which is more formally specified in Definition~\ref{equivariant_map}.

\begin{definition}
    \label{equivariant_map}
    Let $X$ and $Y$ be two $G$-sets, a map $f: X\to  Y$ is $G$\emph{-equivariant} if 
    \[ 
    g \cdot f(x) = f(g \cdot x)
    \]
    for each $x$ in $X$.
\end{definition}

\subsection{Group Representations and Equivariant Affine Transformations}
\label{sec:representations}

Let $G$ be a group and $V$ be a vector space over a field $\R$. A $G$-action $\Phi: G \times V \to V$ on $V$ is \emph{$G$-representation} if $\phi_g$ is linear for each $g$ in $G$.
Or equivalently,
\[
\phi: \begin{gathered}
    G \to \text{GL}(V) \\
    g \mapsto \phi_g
\end{gathered}
\]
where $ \text{GL}(V) $ is the general linear group of $V$, consisting of all invertible linear transformations on $V$.
We will usually identify the entire $\Phi: G \times V \to V$ action with $V$ itself and write $gv = \Phi(g, v)$.

Let $V$ and $W$ be two $G$-representations, we will indicate the set of equivariant linear maps between $V$ and $W$ as $\Hom_G(V, W)$ and as $\Aff_G(V, W)$ the set of equivariant affine maps.
Note that $\Hom_G(V, W)$ is a vector space. 
Indeed, $0 \in \Hom_G(V, W)$ and for each $f, g \in \Hom_G(V, W)$ and each $\alpha, \beta \in \R$, $\alpha f + \beta g \in \Hom_G(V, W)$. 
The same is true for $\Aff_G(V, W)$. 

Let $V$ be a $G$-representation, we define the set of invariant vectors $V^G =\{ v \in V \mid gv = v \; \forall g \in G \}$. 

In Section~\ref{section:affine-transforms} we studied the properties of vector subspaces of affine transformations between vector spaces. Here we are now interested in studying subspaces of equivariant affine transformations. To better understand the structure of such spaces it is necessary to remind Theorem 8 in \cite{pacini_characterization_2024}.

\begin{theorem}
    \label{th:affine_equivariant_space_classification}
    An map $\phi = \tau_w \smcirc f$ belongs to $\Aff_G(V, W)$ if and if $f \in \Hom_G (V, W)$ and $v$ is invariant.
\end{theorem}

We can define restrictions of maps $\theta_G$, $\lambda_G$, and $\tau_G$ as follows
\[
\theta_G:
\begin{gathered}
    \Hom_G(V, W) \oplus W^G \to \Aff_G(V, W) \\
    (\phi, v) \mapsto \tau_v \phi
\end{gathered}
\]
\[
\lambda_G:
\begin{gathered}
    \Aff_G(V, W) \to \Hom_G(V, W) \\
    f \mapsto f - f(0)
\end{gathered}
\quad \text{ and } \quad
\tau_G:
\begin{gathered}
    \Aff_G(V, W) \to W^G \\
    f \mapsto f(0).
\end{gathered}
\]
Theorem~\ref{th:affine_equivariant_space_classification} implies that $\theta_G$ is an isomorphism and a similar proof to the one of Proposition~\ref{prop:linearity_and_surj} shows that both $\lambda_G$ and $\tau_G$ are linear, equivariant, and surjective. When it will be clear we are working in the equivariant setting, we will drop the subscript and just write $\theta$, $\lambda$, and $\tau$.

In the main text we will often use the following results.

\begin{prop}
    \label{prop:sum_hom}
    Let $V_1$, $V_2$, and $V$ be $G$-representations. Then,
    \[
    \Hom_G(V_1 \oplus V_2, V) = \Hom_G(V_1, V) \oplus \Hom_G(V_2, V)
    \]
    and
    \[
    \Hom_G(V, V_1 \oplus V_2) = \Hom_G(V, V_1) \oplus \Hom_G(V, V_2).
    \]
\end{prop}

\begin{prop}
    \label{prop:invariant-in-morphisms}
    Let $V$ and $W$ be $G$-representations, and let $G$ act trivially on $\R^n$ and $\R^m$. Then,
    \[
    \Hom_G(V \otimes \R^n, W \otimes \R^m) \cong \Hom_G(V, W) \otimes \Hom(\R^n, \R^m),
    \]
    and
    \[
    (V \otimes \R^n)^G \cong V^G \otimes \R^n.
    \]
    In other words, recalling that $\Aff_G(V, W) \cong \Hom_G(V, W) \oplus W^G$ and since $\Hom(\R^n, \R^m) \cong \R^{n \times m}$ is the set of $n \times m$ matrices over $\R$, understanding the structure of $\Aff_G(V \otimes \R^n, W \otimes \R^m)$ reduces to understanding the structure of $\Aff_G(V, W)$.
\end{prop}

\subsection{On Permutation Representations}
\label{section:permutation-representations}

The followings are easy and known results from the theory of permutation representations.

We recall the definition of permutation representations as stated in Section~\ref{sec:groups_and_equivariance}.

\begin{definition}
    Let $X$ be a finite set and $G$ a finite group acting on it. A \emph{permutation representation} of $G$ is a representation of $G$ on $\R^X$ such that $g(e_x) = e_{gx}$ for each $g \in G$ and $x \in X$.
\end{definition}

\begin{prop}
    \label{prop:sum_and_prod}
    Let $X$ and $Y$ be two $G$-sets. We have the two following $G$-representations isomorphisms
    \[
    \R^{X \sqcup Y} \cong \R^X \oplus \R^Y \text{ and } \R^{X \times Y} \cong \R^X \otimes \R^Y,
    \]
    where $X \sqcup Y$ indicate the disjoint union of the sets $X$ and $Y$.
\end{prop}

\begin{example}
    \label{tensor-standard-standard}
    Let $S = [n]$ and let $S_n$ act on $S$ in the standard way and note that $\R^S \cong \R^n$ as representations. From Proposition~\ref{prop:sum_and_prod}, we obtain that tensors of order $2$ are $\R^n \otimes \R^n = \R^{S\times S} = \R^{n \times n}$. Let $\Delta = \{(i, i) \mid i \in S\}$ and $\overline \Delta = \{ (i, j) \in S \mid i \neq j\}$, note that $S \times S = \Delta \sqcup \overline \Delta$ and that $S_n$ acts transitively on both $\Delta$ and $\overline \Delta$. Therefore, $\R^n \otimes \R^n \cong \R^{S\times S} \cong \R^\Delta \oplus \R^{\overline \Delta}$.
\end{example}

We say that a group $G$ acts transitively on a set $X$ if this action has only one orbit, namely $Gx = X$ for each $x \in X$.
If $X = X_1 \sqcup \cdots \sqcup X_n$ is the orbit decomposition of $X$, then $\R^X \cong \R^{X_1} \oplus \cdots \oplus \R^{X_n}$.

As the bias terms of equivariant layers are vectors invariant under the action of permutation representations, it is important to characterize the invariant part of a permutation representation. Before proceeding, it is necessary to state the following result.

\begin{prop} 
\label{prop:g-bijection}
If $G$ is a finite group acting transitively on a finite set $X$, then there exists a subgroup $H < G$ and a $G$-set bijection between $X$ and $G/H$. 
\end{prop}

Proposition~\ref{prop:g-bijection} implies that we can restrict our study to representations of the form $\R^{G/H}$ for some subgroup $H$ of $G$.
With this result, we can now proceed to prove Proposition~\ref{prop:characterization-trivial-subspace}. Let $G$ be a group and $X$ be a finite set with action of $G$. For $Y \subseteq X$, recall that $\mathbbm{1}_Y = \sum_{y \in Y} e_y$.

\begin{proof}[Proof of Proposition~\ref{prop:characterization-trivial-subspace}]
    The Reynolds operator 
    \begin{equation*}     
        \cR: \quad
            \begin{gathered}
                V \longrightarrow V^G \\
                v \mapsto \sum_{g \in G} gv \\
            \end{gathered}
    \end{equation*}
    projects each $G$-representation $V$ on its invariant subspace $V^G$. In the case $V = \R^{G/H}$, $e_{kH}$ is an element of the canonical base of $\R^{G/H}$,
    \[
    \cR(e_{kH}) = \sum_{g \in G} ge_{kH} = \sum_{g \in G} e_{gkH} = |H| \sum_{gH \in G/H} e_{gH} = |H| \mathbbm{1}_{G/H}.
    \]
The final observation follows from Proposition~\ref{prop:sum_and_prod}.
\end{proof}

Propositions~\ref{prop:sum_and_prod} and \ref{prop:sum_hom} together imply that characterizing equivariant maps between permutation representations reduces to characterizing equivariant maps between representations induced by transitive actions on finite sets, or equivalently, left cosets by Proposition~\ref{prop:g-bijection}. 
We address this in Proposition~\ref{prop:convolution_is_all_you_need}. 
To prove this result, we first define the concept of right multiplication in Definition~\ref{def:right-mult} and then prove Proposition~\ref{prop:generator_homo_regular}, which characterizes equivariant maps between regular representations.

\begin{definition}
    \label{def:right-mult}
    For each $g \in G$ define the right-multiplication
        \begin{equation*}
         \cR_g: \quad
            \begin{gathered}
                \R^{G} \longrightarrow \R^{G} \\
                e_{h} \mapsto e_{hg^{-1}}. \\
            \end{gathered}
        \end{equation*}
   
\end{definition}

\begin{prop}
    \label{prop:generator_homo_regular}
    Right actions are a basis for the space of equivariant endomorphisms of the regular representation. In other words,
    $\{ \cR_g \}_{g \in G}$ is a basis for $\Hom_G(\R^{G}, \R^{G})$.
\end{prop}

\begin{proof}
    Each linear application $\phi \in \Hom(\R^{G}, \R^{G})$ is defined by the values $\phi(e_g)$ for each $g \in G$ by linear extension. If $\phi$ is $G$-equivariant, it is defined just by its value on $e_e$. Indeed, $\phi(e_g) = \phi(g e_e) = g \phi(e_e)$ for each $g \in G$.
    Note that $\cR_g$ is linear as the right action of $g$ on $\R^G$ is linear and $\cR_g(e_h) = e_{hg^{-1}} = e_h g^{-1}$. It is also equivariant, indeed $\cR_g(hv) = hvg^{-1} = h \cR_g(v)$ for each $v \in \R^G$ and $h \in G$.
    Furthermore, $\cR_{g^{-1}}(e_e) = e_g$ for each $g \in G$ and therefore they generate $\Hom_G(\R^{G}, \R^{G})$.
    
    Suppose that there exist values $a_g \in \R$ for each $g \in G$ such that $\sum_{g \in G} a_g \cR_g = 0$, then
    \[
    0 = \sum_{g \in G} a_g \cR_g(e_e) = \sum_{g \in G} a_g e_{g^{-1}}.
    \]
    Since elements $e_{g}$ are linearly independent, $a_g = 0$ for each $g \in G$. Hence, $\cR_g$ are linearly independent and form a basis for $\Hom_G(\R^{G}, \R^{G})$. 
\end{proof}

Now we would like to have a result similar to Proposition~\ref{prop:generator_homo_regular} but for morphisms between $G/K$ and $G/H$. To do this we need to define  the following injection
    \begin{equation*}     
     \iota_{G/H}: \quad
        \begin{gathered}
            \R^{G / H} \rightarrow \R^{G} \\
            e_{gH} \mapsto \frac{1}{| H |} \sum_{h \in H} e_{gh}, \\
        \end{gathered}
    \end{equation*}
and projection
    \begin{equation*}     
     \pi_{G/H}: \quad
        \begin{gathered}
            \R^{G} \rightarrow \R^{G / H} \\
            e_{g} \mapsto e_{gH}. \\
        \end{gathered}
    \end{equation*}

For an arbitrary representation $V$, we define two surjective maps
\begin{equation*}     
     \iota^*_{G/K}: \quad
        \begin{gathered}
            \Hom_G(\R^{G}, V) \twoheadrightarrow \Hom_G(\R^{G / K}, V)  \\
            \phi \mapsto \phi \smcirc \iota_{G/K}, \\
        \end{gathered}
    \end{equation*}
and 
    \begin{equation*}     
     \pi_{G/H*}: \quad
        \begin{gathered}
            \Hom_G(V, \R^G) \twoheadrightarrow \Hom_G(V, \R^{G / H}) \\
            \phi \mapsto \pi_{G/H} \smcirc \phi. \\
        \end{gathered}
    \end{equation*}

We can now generalize the concept of right multiplication to the general case of transitive actions, a concept necessary for stating Proposition~\ref{prop:convolution_is_all_you_need}, which we will then prove by following the approach used in the proof of Proposition~\ref{prop:generator_homo_regular}.

\begin{definition}
    \label{def:right_action_gen}
    Define $\cR_{HgK} = \alpha \cR_g$ where the map $\alpha = \pi_{G/H*} \smcirc \iota^*_{G/K}$ is defined from $\Hom_G(\R^{G}, \R^{G})$ to $\Hom_G(\R^{G/K}, \R^{G / H})$.
\end{definition}

\begin{prop}
    \label{prop:convolution_is_all_you_need}
    The map $\cR_{HgK}$ is well-defined and the set $\{ \cR_{HgK} \}_{HgK \in H \backslash G / K}$ is a basis for $\Hom_G(\R^{G/K}, \R^{G / H})$. Finally,
    \[ \Big( \cR_{HgK} (e_{kK}) \Big)_{sH} = \begin{cases} 
      \frac{1}{|K|} & \text{if } sH \subseteq kKg^{-1}H, \\
      0 & \text{otherwise}.
   \end{cases}
    \]
\end{prop}

\begin{proof}
    To prove that $\cR_{HgK}$ is well-defined, we need to prove that $\alpha \cR_g  = \alpha \cR_{hgk}$ for each $h \in H$ and $k \in K$. Indeed,
    \begin{multline*}  
    \pi_{G/H}  \cR_{hgk} \iota_{G/K} (e_{sK}) = \frac{1}{|K|} \sum_{t \in K} e_{stk^{-1}g^{-1}h^{-1}H} = \\
    \frac{1}{|K|} \sum_{t \in K} e_{stk^{-1}g^{-1}h^{-1}H} = \frac{1}{|K|} \sum_{t \in K} e_{stg^{-1}H} = \pi_{G/H}  \cR_{g} \iota_{G/K} (e_{sK}),
    \end{multline*}
    where the penultimate equality is true because $h^{-1}H = H$ and variable change $t \mapsto tk^{-1}$ in the sum.

    By Proposition~\ref{prop:generator_homo_regular} the set $\{ \cR_g \}_{g \in G}$ is a basis for  $\Hom_G(\R^{G}, \R^G)$. 
    By the previous observation we have shown that the image of $\{ \cR_g \}_{g \in G}$ under $\alpha$ is $\{ \cR_{HgK} \}_{HgK \in H \backslash G / K}$. As $\alpha$ is a surjection, $\{ \cR_{HgK} \}_{HgK \in H \backslash G / K}$ generates $\Hom_G(\R^{G/K}, \R^{G / H})$. 
    
    Proving linear independence is similar to the proof of linear independence in Proposition~\ref{prop:generator_homo_regular}. Indeed, let $a_{HgK} \in \R$ for each $HgK \in H \backslash G / K$ such that
    \[
    \sum_{HgK \in H \backslash G / K} a_{HgK} \cR_{HgK} = 0.
    \]
    Hence,
    \[
    0 = \sum_{HgK \in H \backslash G / K} a_{HgK} \cR_{HgK}(e_K) = \sum_{HgK \in H \backslash G / K} \frac{1}{|K|} a_{HgK} \sum_{t \in K} e_{tg^{-1}H}.
    \]
    Note that sets $\{ tg^{-1}H \}_{t \in K}$ are pairwise disjoint with $g$ varying between representatives of $HgK$. This means that the respective vectors $\sum_{t \in K} e_{tg^{-1}H}$ are linearly independent, hence each $a_{HgK} = 0$. This proves that the maps $\cR_{HgK}$ are linearly independent.
    Finally, observing that
    \[
    \cR_{HgK}(e_{kK}) = \frac{1}{|K|} \sum_{t \in K} e_{ktg^{-1}H},
    \]
    it is clear that
    \[ 
    \Big( \cR_{HgK} (e_{kK}) \Big)_{sH} = \begin{cases} 
      \frac{1}{|K|} & \text{if } sH \subseteq kKg^{-1}H, \\
      0 & \text{otherwise}.
   \end{cases}
    \]
\end{proof}

\begin{remark}
     In our case of interest, in which $G$ is a finite group, the map $v \mapsto v \cdot w$ is equivalent at convolving $v$ by $w$. Proposition~\ref{prop:convolution_is_all_you_need} is just a restatement and integration of Theorem 1 in \cite{kondor_generalization_2018} in the restricted case of homogeneous spaces of finite groups.
\end{remark}

\section{Main Results}

\subsection{The Characterization Theorem}
\label{section:proof-main-th}

To formally state and prove Theorem~\ref{th:main}, we need to understand how bias terms in neural spaces transform under the twin network trick.
Note that, in general, we have $\tau(\overline{\Aff_G(V, W)}) \subsetneq \tau(\Aff_G(V \oplus V, W \oplus W))$, as illustrated by the following Example~\ref{example:twin-egn}.

\begin{example}[Twin Layers in $2$-IGNs]
    \label{example:twin-egn}
    Let us consider IGN layers. In this case, $V = W = \R^X$. The twin layer takes and has values in $\R^X \oplus \R^X \cong \R^{X \sqcup X'}$ where $X'$ is a disjoint copy of $X$.
    Hence, the twin IGN layer is an affine function $\R^{X \sqcup X'} \to \R^{X \sqcup X'}$ defined as follows:
    \begin{equation}
        \label{eq:IGN-layer}
        (A, B) \mapsto (L(A), L(B)) + y_{1} \mathbbm{1}_{X_1} + y_{2} \mathbbm{1}_{X_2} + y_1 \mathbbm{1}_{X'_1} + y_{2} \mathbbm{1}_{X'_2},
    \end{equation}
    where $X'_1$ and $X'_2$  are the copies of $X_1$ and $X_2$ in $X'$, respectively. Note that they share the same parameter for $\mathbbm{1}_{X_1}$ and $\mathbbm{1}_{X'_1}$ and for $\mathbbm{1}_{X_2}$ and $\mathbbm{1}_{X'_2}$, while the translational part of $\Aff_{S_n}(\R^{X \sqcup X'}, \R^{X \sqcup X'})$ is $y_{1} \mathbbm{1}_{X_1} + y_{2} \mathbbm{1}_{X_2} + y'_1 \mathbbm{1}_{X'_1} + y'_{2} \mathbbm{1}_{X'_2}$.
    Alternatively, we have
    \[
    \dim \tau(\overline{\Aff_{S_n}(\R^{X}, \R^{X})}) = 2 \text{ and } \dim \tau(\Aff_{S_n}(\R^{X \sqcup X'}, \R^{X \sqcup X'})) = 4.
    \]
    Hence,
    \[
    \tau \left(\overline{\Aff_{S_n}(\R^{X}, \R^{X})}\right) \subsetneq \tau \left(\Aff_{S_n}(\R^{X \sqcup X'}, \R^{X \sqcup X'})\right).
    \]
\end{example}

Example~\ref{example:twin-egn} shows that we need a definition of bias that can describe the bias terms in \labelcref{eq:IGN-layer}. This will be provided in Definition~\ref{def:complete-bias}.

\begin{definition}[Complete Bias]
    \label{def:complete-bias}
    We say that the subspace $M$ of $\Aff_G(V, \R^X)$ has \emph{complete bias} if $\lambda(M) \oplus \tau(M) \cong M$ through $\theta$ and $\tau(M) = \langle \mathbbm{1}_P \rangle_{P \in \cP}$, where $\cP$ is a partition of $X$.
    In this case, we say that the bias of $M$ is \emph{subordinate} to the partition $\cP$. 
    In the opposite case, we say that $M$ has \emph{incomplete bias}. 
    In particular, we say that $M$ has \emph{null bias} if its translational part $\tau(M)$ is zero; in other words, $M$ is simply a subspace of $\Hom_G(V, \R^X)$ and $M = \lambda(M)$.
\end{definition}

The following observations are necessary to justify this definition.

\begin{remark} 
    \label{rmk:real-world}
    Note that each subspace $M$ of $\Aff_G(V, \R^X)$ such that $M = \lambda(M) \oplus \tau(M)$ and $\tau(M) = \tau(\Aff_G(V, \R^X))$ has complete bias. Indeed, in this case we have $\cP = \{ X_1, \dots, X_n \}$, the $G$-orbit decomposition of $X$ and 
    \[
        \tau(M) = \langle \mathbbm{1}_{X_i} \rangle_{X_i \in \cP} 
    \]
    as proven in Proposition~\ref{prop:characterization-trivial-subspace}. Therefore, all examples in Section~\ref{sec:equi_nn} have complete bias.
\end{remark}

\begin{remark}
    \label{rmk:disentangled-variables}
    Note that each subspace $M$ of $\Aff_G(V, \R^X)$ has complete bias if and only if there exist $\phi_1, \dots, \phi_s$ in $\Hom_G(V, \R^X)$ and a partition $\cP$ of $X$ such that each map in $M$ can be written as
    \begin{equation}
    \label{eq:basis-writing}
    \beta \mapsto x_1 \phi_1(\beta) + \cdots + x_s \phi_s(\beta) + \sum_{Y \in \cP} y_P \mathbbm{1}_Y
    \end{equation}
    where $x_i$ and $y_P$  are real parameters for each $i = 1, \dots, s$ and each $P$ in $\cP$.
\end{remark}

\begin{prop}
\label{prop:twin-structure}
Let $M_1, \dots, M_{d-1}$ be subspaces with complete bias, then the space of twin networks
\[
\cN_\sigma(\overline M_1, \dots, \overline M_{d-1}, M'_d)
\] has intermediate layers with complete bias and output layer with null bias.  
\end{prop}

The proof of Proposition~\ref{prop:twin-structure} relies on Proposition~\ref{prop:twin-hidden} and Lemma~\ref{lemma:twin-output} which will be stated shortly.

\begin{definition}
    Let $X$ be a finite set, we define the \emph{duplicate set} of $X$ as the set $X \sqcup X'$ where $X'$ is a disjoint copy of $X$. 
    Let $\cP$ be a partition of $X$, we define the \emph{duplicate partition} of $\cP$ as the partition $\cP'$ of the duplicate of $X$ such that $\cP' = \left\{ Y \sqcup Y' \mid Y \in \cP \right\}$. 
    For each $y \in Y$, we will usually indicate the respective element in $Y'$ as $y'$, although when it will be clear from the context we may abuse notation and call both $y$.
\end{definition}

\begin{prop}
    \label{prop:twin-hidden}
    If $M$ is a sub-vector space of $\Aff_G(V, \R^X)$ with complete bias subordinate to partition $\cP$ then the twin space $\overline M$ is a subspace of $\Aff_G(V \oplus V, \R^X \oplus \R^X) \cong \Aff_G(V \oplus V, \R^{X \sqcup X'})$ and has complete bias subordinate to the duplicate partition of $\cP$. In particular,
    $\dim \tau(\overline M) = \dim \tau(M)$ and $\dim \tau(\Aff_G(V \oplus V, \R^X \oplus \R^X)) = 2 \dim \tau( \Aff_G(V, \R^X)) = 2 \dim \tau( \overline{\Aff_G(V, \R^X)})$.
\end{prop}

\begin{proof}
    Trivially, if $M = \lambda(M) \oplus \tau(M)$, then $\overline M = \lambda \left(\overline M \right) \oplus \tau \left(\overline M \right)$.
    Noticing that $\zeta: \R^X \oplus \R^X \cong \R^{X \sqcup X'}$ such that $\zeta((e_x, 0)) = e_{x}$ and $\zeta((0, e_x)) = e_{x'}$ is an isomorphism.
    First we show that that $\Aff_G(V \oplus V, \R^X \oplus \R^X) \cong \Aff_G(V \oplus V, \R^{X \sqcup X'})$.
    If $M$ has complete bias subordinate to $\cP$ then $\tau(M) = \langle \mathbbm{1}_{Y} \rangle_{Y \in \cP}$ and each $v \in \tau(M)$ can be written as $v = \sum_{Y \in \cP} a_Y \mathbbm{1}_Y$ for some $(a_Y)_Y \in \R^{\cP}$.
    Remember that $\tau(M) = \{ \phi(0) \mid \phi \in M \}$, then 
    \begin{align*}
        \tau(\overline M) = 
        \{ (\phi(0), \phi(0)) \mid \phi \in M \} = 
        \{ (v, v) \mid v \in \tau (M) \} = \\
        \left\{ (\sum_{Y \in \cP} a_Y \mathbbm{1}_Y, \sum_{Y \in \cP} a_Y \mathbbm{1}_Y) \mid (a_Y)_Y \in \R^{\cP} \right\}.
    \end{align*}
    By the isomorphism $\zeta :\R^X \oplus \R^X \cong \R^{X \sqcup X'}$, we can write $\zeta(\sum_{Y \in \cP} a_Y \mathbbm{1}_Y, \sum_{Y \in \cP} a_Y \mathbbm{1}_Y) = \sum_{Y \in \cP} a_Y \mathbbm{1}_{Y \sqcup Y'}$.
\end{proof}

\begin{lemma}
    \label{lemma:twin-output}
    The output space of a twin network $M'$ always has null bias, independently of the bias space $\tau(M)$.
\end{lemma}

\begin{proof}
    Let $\phi: v \mapsto f(v) + y$ be an affine map in $M < \Aff_G(V, W)$. Then the output layer of the twin network is
    \[
    \phi'(v, w) = \phi(v) - \phi(w) = f(v) + y - f(w) - y = f(v) - f(w),
    \]
    which is always a linear map in $\Aff_G(V \oplus V, W)$. Equivalently, $\tau(M') = 0$.
\end{proof}

\begin{proof}[Proof of Proposition~\ref{prop:twin-structure}]
    Apply Proposition~\ref{prop:twin-hidden} to $M_1, \dots, M_{d-1}$ and Lemma~\ref{lemma:twin-output} to $M_d$.     
\end{proof}

Together Remark~\ref{rmk:real-world} and Proposition~\ref{prop:twin-structure} show that we only need to solve zero locus problems for networks with complete bias in the intermediate layers and null bias in the final layer. We are now able to give the complete and formal statement and proof of Theorem~\ref{th:main}.

\begin{theorem}
\label{th:main-formal}
Let $M_1, \dots, M_{d-1}$ have complete bias and let $M_d$ have null bias. 
Let $\phi^{d, 1}, \dots, \phi^{d, s_d}$ be a set of generators of $M_d \;< \Aff_G(\R^{X_{d}}, \R^{X_{d+1}})$, and let the bias of $M_{d-1}$ be subordinate to the partition $\cP$.
Furthermore, for each $h = 1, \dots, s_d$ and each $k \in X_{d+1}$ define
\[
\Psi_{h, k} = \left\{  \cQ \leq \cP  \, |  \, \sum_{i \in P}\phi^{d, h}_{ki} = 0, \, \forall P \in \cQ \right\}.
\]
If $\sigma$ is a non-polynomial activation function, then we have the following recursive formula with respect to network depth
\begin{equation*}
    \cI(\cN_\sigma(M_1, \dots, M_d)) =  \bigcap_{h, k} \bigcup_{\cQ \in \Psi_{h, k}} 
    \bigcap_{\substack{P \in \cQ \\ i, j \in P}} 
    \cI(\cN_\sigma(M_1, \dots, M_{d-2}, (M_{d-1})_{ij})),
\end{equation*}
where $(M_{d-1})_{ij} = \{\phi':x \mapsto \pi_i \phi(x) - \pi_j \phi(x) \, | \, \phi \in \lambda(M_{d-1}) \}$, $\pi_i: \R^X \rightarrow \R$ is the projection on the $i$-th component of $\R^X$ for each $i$ in $X$, and $\lambda(M_{d-1})$ is the linear part of $M_{d-1}$.
\end{theorem}

\begin{proof}
Denote $\cF_d = \{ \phi^{d, 1}, \dots, \phi^{d, s_d} \}$.
We can restrict to compute $\cI(\cN_\sigma(M_1, \dots, M_{d-1}, \cF_d))$ since, by Lemma~\ref{lemma:basis}, we know
\[
    \cI(\cN_\sigma(M_1, \dots, M_d)) = \cI(\cN_\sigma(M_1, \dots, M_{d-1}, \cF_d)).
\]

Each $d$-layer neural network $\eta^{d, h}$ in $\cN_\sigma(M_1, \dots, M_{d-1}, \cF_d)$ can be written, for each input $\beta$, as
\begin{equation}
\label{eq:basic}
\qquad\qquad\qquad\qquad\qquad
\eta^{d, h}(\beta) = \phi^{d, h}  \tilde \sigma (\eta^{d-1}(\beta) + y) 
\qquad\qquad\quad (\forall h = 1, \dots, s_d)
\end{equation}
where 
\begin{itemize}
    \item The map $\phi^{d, h}$ is the $h$-th element in $\cF_d$ and is linear since $M_d$ has null bias.
    \item The the map $\eta^{d-1}$ is $(d-1)$-layer network belonging to $\cN_{\sigma}(M_1, \dots, M_{d-2}, \lambda(M_{d-1}))$.
    \item The vector $y$ is a bias term in the translational part of $M_{d-1}$, namely the invariant sub-space of $\R^{X_d}$, and has complete bias subordinate to a partition $\cQ$. 
    Hence,
    \begin{equation}
    \label{eq:complete-bias-sub}
        y = \sum_{P \in \cQ} y_P \mathbbm{1}_P. 
    \end{equation}
\end{itemize}
In a similar fashion, define $\eta^{d-1, t}$ in $\cN(M_1, \dots, M_{d-1}, \lambda(M_{d-2}))$ for each $t = 1, \dots, s_{d-1}$ and some $s_{d-1} \geq 1$.
Note that, due to Remark \ref{rmk:disentangled-variables},
\begin{equation}
\label{eq:shorter-depth}
\eta^{d-1} = \sum_{t = 1}^{s_{d-1}} x_t \eta^{d-1, t}
\end{equation}
for some $x_1, \dots, x_{s_{d-1}} \in \R$.
Therefore, by substituting both \labelcref{eq:complete-bias-sub} and \labelcref{eq:shorter-depth} into \labelcref{eq:basic}, we get
\begin{equation}
    \label{eq:compact}
    \eta^{d, h}(\beta) = \phi^{d, h}  \tilde \sigma \left( \sum_{t = 1}^{s_{d-1}} x_t \eta^{d-1, t}(\beta) + y \right).
\end{equation}
Recall that $\phi^{d, h}$ is a linear map from $\R^{X_d}$ to $\R^{X_{d+1}}$, defined by the elements $\phi_{ki}^{d, h} = \phi_k^{d, h}(e_i)$ for each input entry $i \in X_{d}$ and output entry $k \in X_{d+1}$. 

With this notation, we can express \labelcref{eq:compact} in coordinates as follows
\begin{equation}
    \label{eq:first-coord}
    \eta^{d, h}_{k}(\beta) = \sum_{i \in X_d} \phi_{ki}^{d, h} \sigma \left( \sum_{t = 1}^{s_{d-1}}  x_t \eta_{i}^{d-1, t} (\beta) + y_i \right).
\end{equation}
For each $i \in X_d$, let $P$ be the unique element in $\cQ$ containing $i$. Then $y_i = y_P$, where $y_i$ is the coefficient defined in \labelcref{eq:first-coord}, and $y_P$ is the one defined in \labelcref{eq:complete-bias-sub}.

Hence, we can write \labelcref{eq:first-coord} as follows
\[
\eta^{d, h}_{k}(\beta) = \sum_{\substack{P \in \cQ \\ i \in P}} \phi_{ki}^{d, h} \sigma \left( \sum_{t = 1}^{s_{d-1}}  x_t \eta_{i}^{d-1, t} (\beta) + y_{P} \right)
\]
for each output entry $k$ in $\R^{X_{d+1}}$.

Thus, an element $\beta$ belongs to $\cI(\cN_\sigma(M_1, \dots, M_{d-1}, \cF_d))$ if and only if
\begin{equation}
    \label{eq:main-result}
    \sum_{\substack{P \in \cQ \\ i \in P}} \phi_{ki}^{d, h}  \sigma \left( \sum_{t = 1}^{s_{d-1}}  x_t \eta_{i}^{d-1, t} (\beta) + y_{P} \right) = 0
\end{equation}
for each $x_t, y_P$, $h$, $k$, and $\eta^{d-1, t}$.

Assuming that $\sigma$ is non-polynomial and setting $a_i = \phi^{d, h}_{ki}$ and $b_i = (\eta_i^{d-1, t}(\beta))_t$, the second part of Theorem~\ref{th:kiss-fundamental} implies that $(\eta_i^{d-1, t}(\beta))_{i, t}$ solves \labelcref{eq:main-result}
for specific $h$ and $k$ if and only if 
\begin{equation*}
(\eta_{i}^{d-1, t}(\beta))_i \in \bigcup_{\cQ \in \Psi_{h, k}} 
    \bigcap_{\substack{P \in \cQ \\ i,j \in P}} 
    \left\{ (\eta_{i}^{d-1, t}(\gamma))_i \mid \eta_{i}^{d-1, t}(\gamma) - \eta_{j}^{d-1, t}(\gamma) = 0 \right\}.
\end{equation*}
Note that $\beta$ satisfies \labelcref{eq:main-result} for specific $h$ and $k$ if and only if $(\eta^{d-1, t} (\beta))_{i, t}$ satisfies it. 
Hence,
\begin{equation}
\label{eq:main-result-coord}
\beta \in \bigcup_{\cQ \in \Psi_{h, k}} 
    \bigcap_{\substack{P \in \cQ \\ i,j \in P}} 
    \left\{ \gamma \mid \eta_{i}^{d-1, t}(\gamma) - \eta_{j}^{d-1, t}(\gamma) = 0 \; \forall t \right\}.
\end{equation}

By the definition of $(M_{d-1})_{ij}$, we get
\begin{equation*}
\cI(\cN_\sigma(M_1, \dots, M_{d-2}, (M_{d-1})_{ij})) = \{ \beta \mid \eta_i^{d-1, t}(\beta) - \eta_j^{d-1, t}(\beta) = 0 \}.
\end{equation*}
Therefore, $\beta$ satisfies \labelcref{eq:main-result} for specific $h$ and $k$ if and only if 
\begin{equation*}
\beta \in \bigcup_{\cQ \in \Psi_{h, k}} 
    \bigcap_{\substack{P \in \cQ \\ i,j \in P}} 
    \cI(\cN_\sigma(M_1, \dots, M_{d-2}, (M_{d-1})_{ij})).
\end{equation*}
Since $\beta$ has to satisfy \labelcref{eq:main-result} for each $h$ and $k$, we finally get
\begin{equation*}
\cI(\cN_\sigma(M_1, \dots, M_d)) = \bigcap_{h, k}\bigcup_{\cQ \in \Psi_{h, k}} 
    \bigcap_{\substack{P \in \cQ \\ i,j \in P}} 
    \cI(\cN_\sigma(M_1, \dots, M_{d-2}, (M_{d-1})_{ij})).
\end{equation*}

\end{proof}

\begin{remark}
    \label{rmk:different-activations}
    Theorem~\ref{th:main} could actually be stated with different activation functions for each layer, as long as they are all non-polynomial. However, for readability and simplicity, we have presented the results using a single activation function.
\end{remark}

\begin{remark}
    \label{rmk:incomplete-bias}
    Here, we demonstrate that the complete bias assumption is necessary for all non-polynomial activations to achieve maximal separation power. Specifically, let us examine the separation power of the set of shallow neural networks where all representation spaces are one-dimensional and the hidden layer has a null, and therefore incomplete, bias term.
    The main concern is the separability of opposite inputs $\beta$ and $-\beta$.
    This reduces to study the identification equation
    \[
    y\sigma(\beta x) = y\sigma(- \beta x)
    \]
    for each $x, y \in \R$.
    Any even function $\sigma$, including non-polynomial ones, solves this equation but does not achieve maximal separation power, which could be reached by adding a bias term, as shown in Theorem~\ref{th:activations}.
\end{remark}

\subsection{The Role of Activations}

\begin{proof}[Proof of Theorem~\ref{th:activations}]
    We prove that non-polynomial activation functions have equivalent separation power by induction on $d$. 
    
    If $d = 1$ then $\cI(\cN_\sigma(M_1)) = \cI(M_1)$ which does not depend on $\sigma$. 
    
    Now suppose that $\cI(\cN_\sigma(M_1, \dots, M_{d-1}))$ does not depend on $\sigma$ for each sequence $M_1, \dots, M_{d-1}$. Then, observing \labelcref{eq:fundamental} 
    \[
    \cI(\cN_\sigma(M_1, \dots, M_d)) =  \bigcap_{h, k} \bigcup_{\cP \in \Psi_{h, k}} \bigcap_{\substack{P \in \cP \\ i, j \in P}} \cI(\cN_\sigma(M_1, \dots, M_{d-2}, (M_{d-1})_{ij})),
    \]
    we note that $\cI(\cN_\sigma(M_1, \dots, M_d))$ is independent of $\sigma$ as indices such as $h, k$ and $i, j$ are independent of $\sigma$, as well as $\cI(\cN_\sigma(M_1, \dots, M_{d-2}, (M_{d-1})_{ij}))$ is by inductive hypothesis. 

    Finally, the first part of Theorem~\ref{th:activations} follows directly from the proof of Theorem~\ref{th:main} and the last part of Theorem~\ref{th:kiss-fundamental}.
\end{proof}

\subsection{The Role of Depth}

\begin{proof}[Proof of Theorem~\ref{th:depth_stabilization}]
    
    To prove the first part of the statement, by Lemma~\ref{lemma:subset}, it suffices to show that
    \begin{equation}
        \label{eq:descenting_chain}
        \cN_\sigma(M_1, \dots, M_i, \dots, M_d) \subseteq \cN_\sigma(M_1, \dots, \underbrace{M_i, \dots, M_i}_{n\text{-times}}, \dots,  M_d).
    \end{equation}
    for each $n \geq 1$.
    
    Moreover, Theorem~\ref{th:activations} implies that is enough to prove this inclusion for a fixed non-polynomial $\sigma$; then Theorem~\ref{th:depth_stabilization} will hold for any other non-polynomial activation as well. Therefore, let $\sigma$ be the $\ReLU$ activation function, noting that in this case $\sigma\smcirc \sigma = \sigma$.

    In particular,
    \[
    \tilde \sigma = \underbrace{\tilde \sigma \smcirc \tilde \sigma \smcirc \cdots \smcirc \tilde \sigma}_{n\text{-times}} = \underbrace{\tilde \sigma \smcirc \, id_{\R^{X_i}}\smcirc \tilde \sigma \smcirc \cdots \smcirc \tilde \sigma \smcirc \, id_{\R^{X_i}}}_{n\text{-times}},
    \]
    for each $n \geq 1$.
    
    Thus, each neural network $\phi_d \smcirc \, \tilde \sigma \smcirc \cdots\smcirc \tilde \sigma \smcirc \, \phi_1$ in $\cN_\sigma(M_1, \dots, M_i, \dots, M_d)$ can be written as
    \[
    \phi_d \smcirc \, \tilde \sigma \smcirc \cdots\smcirc \tilde \sigma \smcirc \, \phi_{i+1} \smcirc \, \underbrace{\tilde \sigma \smcirc \, id_{\R^{X_i}} \smcirc \, \tilde \sigma \smcirc \cdots \smcirc \tilde \sigma \smcirc \, id_{\R^{X_i}}}_{n\text{-times}} \smcirc \, \tilde \sigma \smcirc \, \phi_{i-1} \smcirc \cdots\smcirc \tilde \sigma\smcirc \phi_1
    \]
    which is an element of $\cN_\sigma(M_1, \dots, \underbrace{M_i, \dots, M_i}_{n\text{-times}}, \dots,  M_d)$, thereby proving \labelcref{eq:descenting_chain}.

    The final step is to prove the stabilization property. 
    This is achieved by recalling that, by Proposition~\ref{prop:equivalence} and Theorem~\ref{th:main},
    \begin{equation}
        \label{eq:recursive}
        \cI(\cN_\sigma(\overline M_1, \dots, \overline M_{d-1}, M'_d)) =  \bigcap_{h, k} \bigcup_{\cP \in \Psi_{h, k}} \bigcap_{\substack{P \in \cP \\ i, j \in P}} \cI(\cN_\sigma(\overline M_1, \dots, \overline M_{d-2}, (M_{d-1})_{ij})).
    \end{equation}
    Define
    \[
        C_n = \cI(\cN_\sigma(\overline M_1, \dots, \underbrace{\overline M_i, \dots, \overline M_i}_{n \text{ times}}, \dots, M'_d))
    \]
    for each $n \in \N$.
    Recursively applying \labelcref{eq:recursive}, both $C_n$ and $C_m$ can be represented as unions and intersections of elements in the finite set
    \[
        \cC = \{ \cI(\cN_\sigma(\overline M_1, \dots, \overline M_{i-1}, (\overline M_{i})_{hk})) \}_{h, k \in X_i}.
    \]
    We can reformulate the descending sequence \labelcref{eq:descenting_chain} as follows
    \[
        \cdots \subseteq C_n \subseteq C_{n-1} \subseteq \cdots \subseteq C_1
    \]
    which stabilizes due to Lemma~\ref{lemma:boolean-lattice}.
\end{proof}

\begin{lemma}
    \label{lemma:boolean-lattice}
    Let  $\cC = \{ C_1, \dots, C_d \}$ be a finite collection of sets. The following statements are true:
    \begin{itemize}
        \item Let $\cC_\cup = \{ C_{i_1} \cup \cdots \cup C_{i_r} \mid 1 \leq i_1, \dots, i_r \leq d, \; r \in \N \}$ be the collection of unions of a finite number of sets in $\cC$. Then $\cC_{\cup}$ is finite.
        \item Let $\cC_\cap = \{ C_{i_1} \cap \cdots \cap C_{i_r} \mid 1 \leq i_1, \dots, i_r \leq d, \; r \in \N \}$ be the collection of intersections of a finite number of sets in $\cC$. Then $\cC_{\cap}$ is finite.
        \item Let $\tilde \cC$ the smaller collection containing $\cC$ which is closed by intersection and union. Then $\tilde \cC = (\cC_\cap)_\cup = (\cC_\cup)_\cap$ and, in particular, is finite.
    \end{itemize}

    In particular, ascending and descending sequences of inclusions in $\tilde \cC$ stabilize.
\end{lemma}

\begin{proof}
    To prove the first point, it is sufficient to note that duplicates in the expression $C_{i_1} \cup \cdots \cup C_{i_r}$ can be removed. Therefore, the cardinality of $\cC_\cup$ is bounded by the number of possible tuples $i_1, \dots, i_r$ which are $2^d$.
    The proof of the second point is analogous.

    By the distributive property of intersections with respect to unions we obtain that each element in $\tilde \cC$ can be written as
    \[
    (C_{i_{1, 1}} \cap \cdots \cap C_{i_{1, d_1}}) \cup \cdots \cup (C_{i_{r, 1}} \cap \cdots \cap C_{i_{r, d_r}}).
    \]
    Hence, $\tilde \cC = (\cC_\cap)_\cup$.
    Similarly, using the distributive property of unions with respect to intersections, we get $\tilde \cC = (\cC_\cup)_\cap$.
    In particular, $\tilde \cC$ is finite as $\cC_\cup$ and, hence, $(\cC_\cup)_\cap$ are finite.
\end{proof}

The repetition threshold may vary depending on the model and representation. For example, $k$-IGNs, being equivalent to $k$-WL, have a repetition threshold proportional to that of $k$-WL itself \citep{maron_provably_2019, geerts_expressive_2020}. In contrast, the Proposition~\ref{prop:depth-fast-stab} demonstrates an example of stabilization after just one repetition.


\begin{proof}[Proof of Proposition~\ref{prop:depth-fast-stab}]
    From previous observations, we know that
    \begin{equation}
    \label{eq:stable_depth_regular} 
    \rho(\cN_\sigma(V, \R^G, \dots, \R^G, \R^{G/H}))
    \subseteq 
    \rho(\cN_\sigma(V, \R^G, \R^{G/H})).
    \end{equation}
    Note that the family of equivariant continuous functions $\cC_G(V, \R^{G/H})$ cannot separate $H$-orbits in $V$. Indeed, for each $f \in \cC_G(V, \R^{G/H})$, $f(hv) = hf(v) = f(v)$ for each $h \in H$.
    Hence, Proposition~\ref{prop:regular_hidden} implies that $\cN_\sigma(V, \R^G, \R^{G/H})$ has the finer separation power between families of functions in $\cC_G(V, \R^{G/H})$. This implies equality in \labelcref{eq:stable_depth_regular}, concluding the proof.
\end{proof}

\subsection{The Role of Intermediate Representations}

For now, we focus on developing the notation necessary to state and prove Theorem~\ref{th:width}. The structure of our network of interest is as follows:
\[
    \eta: V \xrightarrow{\phi_1} V_1 \xrightarrow{\tilde \sigma} \cdots \xrightarrow{\phi_i} V'_i \oplus V''_i \xrightarrow{\tilde \sigma} V'_i \oplus V''_i \xrightarrow{\phi_{i+1}} \cdots \xrightarrow{\tilde \sigma} V_d \xrightarrow{\phi_{d+1}} W
\]
with $\eta \in \cN_\sigma(M_1, \dots, M_d)$.

To formulate the identification equivalence relation of these networks in terms of the identification relations of simpler architectures with only $V'$ and $V''$ as intermediate representations, we need to define the projection map $\pi': V' \oplus V'' \rightarrow V'$ and the immersion map $\iota': V' \rightarrow V' \oplus V''$. 
Similarly, we can define $\pi''$ and $\iota''$. 
Furthermore, for any $G$-representation $W$, we define 
\[
\pi'_*: 
\begin{gathered}
\Aff_G(W, V' \oplus V'') \to \Aff_G(W, V') \\
f \mapsto \pi' \smcirc f
\end{gathered}
\text{ and }
\iota'^*: 
\begin{gathered}
\Aff_G(V' \oplus V'', W) \to \Aff_G(V', W) \\
f \mapsto f \smcirc \iota'
\end{gathered}.
\]
Similarly, we define $\pi''_*$ and $\iota''^*$.
Let $M$ be a subspace of $\Aff_G(W, V' \oplus V'')$, its image $\pi'_*(M)$ is a subspace of $\Aff_G(W, V')$ and
\[
M = \pi'_*(M) + \pi''_*(M).
\]
Indeed, each $f \in M$ can be expressed as $f = \pi' f + \pi'' f = \pi'_*(f) + \pi''_*(f)f$, identifying $V'$ and $V''$ as subspaces of $V$.
Similarly, for $M$ subspace of $\Aff_G(V' \oplus V'', W)$,
\[
M = \iota'^*(M) + \iota''^*(M)
\]
Hence, we can write
\[
\cN_\sigma(M_1, \dots, M_d) = \cN_\sigma(M_1, \dots, \pi'_*(M_i) + \pi''_*(M_i), \iota'^*(M_{i+1}) + \iota''^*(M_{i+1}), \dots, M_d),
\]
and the problem informally stated above reduces to determining the separation power of the entire family $\cN_\sigma(M_1, \dots, M_d)$ by understanding the separation power of the smaller families $\cN_\sigma(M_1, \dots, \pi' M_i, \iota'^*(M_{i+1}), \dots, M_d)$ and $\cN_\sigma(M_1, \dots, \pi'' M_i, \iota''^*(M_{i+1}), \dots, M_d)$. This is achieved by the following theorem.

\begin{theorem}
\label{th:width-strong}
With the notation defined above, we have
\begin{multline*}
    \rho(\cN_\sigma(M_1, \dots, M_d)) = \\
    \rho(\cN_\sigma(M_1, \dots, \pi'_* (M_i), \iota'^* (M_{i+1}), \dots, M_d)) \, \cap \, \rho(\cN_\sigma(M_1, \dots, \pi''_* (M_i), \iota''^* (M_{i+1}), \dots, M_d)).
\end{multline*}
\end{theorem}

\begin{proof}
Note that
$\overline{\psi \circ \phi} = \overline{\psi} \circ \overline{\phi}$. Indeed,
$\overline{\psi} \circ \overline{\phi} = (\phi, \phi) \circ (\psi, \phi) =
(\phi \circ \psi, \phi \circ \psi) = \overline{ \phi \circ \psi}$,
and $\tilde \sigma \iota' = \iota' \tilde \sigma$ and $\tilde \sigma \pi' = \pi' \tilde \sigma$. Similarly, for $\pi''$ and $\iota''$. \\
Furthermore, $\overline{(\iota'+ \iota'') \circ (\pi' + \pi'')}= (\overline{\iota'} + \overline{\iota''}) \circ (\overline{\pi'} + \overline{\pi''})$ since
\begin{align*}   
\overline{(\iota'+ \iota'') \circ (\pi' + \pi'')} 
&= \overline{id_{V'_i \oplus V''_i}} 
= \left(\overline{id_{V'_i} \oplus 0_{V''_i}}\right) + \left(\overline{0_{V'_i} \oplus id_{V''_i}}\right) \\ 
&= \left(\overline{\iota' \circ \pi'}\right) + \left(\overline{\iota'' \circ \pi''} \right)
= \left(\overline{\iota'} \circ \overline{ \pi'}\right) + \left(\overline{\iota'' } \circ \overline{\pi''} \right)\\
&= \left(\overline{\iota'} + \overline{\iota''}\right) \circ \left(\overline{\pi'} + \overline{\pi''}\right).
\end{align*}
We now need to prove that
\begin{equation}
\label{eq:diagonal_factoring}   
\overline{(\psi \iota'+ \psi \iota'')} \, \tilde \sigma \,\overline{(\pi' \phi+ \pi'' \phi)} = (\overline{\psi \iota'} + \overline{\psi \iota''} ) \, \tilde \sigma \,(\overline{\pi' \phi}+ \overline{\pi'' \phi}).
\end{equation}
Indeed,
\begin{align*} 
\overline{(\psi \iota'+ \psi \iota'')} \, \tilde \sigma \,\overline{(\pi' \phi+ \pi'' \phi)} 
&= \overline{\psi} \circ \overline{(\iota' + \iota'')} \tilde \sigma \overline{(\pi' + \pi'')} \circ \overline{\phi}\\
&=\overline{\psi} \circ \overline{(\iota' + \iota'')} \overline{(\pi' + \pi'')} \circ \tilde \sigma \overline{\phi} 
=\overline{\psi} \circ (\overline{\iota'} + \overline{\iota''}) (\overline{\pi'} + \overline{\pi''}) \circ \tilde \sigma \overline{\phi}\\
&=\overline{\psi} \circ (\overline{\iota'} + \overline{\iota''}) \tilde \sigma (\overline{\pi'} + \overline{\pi''}) \overline{\phi} 
=(\overline{\psi \iota'} + \overline{\psi \iota''} ) \, \tilde \sigma \,(\overline{\pi' \phi}+ \overline{\pi'' \phi}).
\end{align*}
Hence, thanks to \labelcref{eq:diagonal_factoring},

\begin{align*}
\cN_\sigma(\overline M_1, \dots, \overline M_{d-1}, M_{d}') = \\
\cN_\sigma(\overline M_1, \dots, \overline{\pi'_*(M_i) + \pi''_*(M_i)}, \overline{\iota'^* (M_{i+1}) + \iota''^* (M_{i+1})},\dots, \overline M_{d-1}, M_{d}') = \\
\cN_\sigma(\overline M_1, \dots, \overline{\pi'_*(M_i)} + \overline{\pi''_*(M_i)}, \overline{\iota'^* (M_{i+1})} + \overline{\iota''^* (M_{i+1})},\dots, \overline M_{d-1}, M_{d}').
\end{align*}
By Theorem~\ref{th:main} and the previous observations, we can limit to study spaces of the type
\begin{align*}
\cN_\sigma(\overline M_1, \dots, \overline{\pi'_*(M_i)} + \overline{\pi''_*(M_i)}, (\overline{\iota'^* (M_{i+1})} + \overline{\iota''^* (M_{i+1})})_{uv}) = \\
\cN_\sigma(\overline M_1, \dots, \overline{\pi'_*(M_i)} + \overline{\pi''_*(M_i)}, \overline{(\iota'^* (M_{i+1}))}_{uv}) + \\
\cN_\sigma(\overline M_1, \dots, \overline{\pi'_*(M_i)} + \overline{\pi''_*(M_i)}, \overline{(\iota''^* (M_{i+1}))}_{uv})
\end{align*}
thanks to the linearity of the map $\phi \mapsto (\phi)_{uv}$.
Note that
\[
\cN_\sigma(\overline M_1, \dots, \overline{\pi'_*(M_i)} + \overline{\pi''_*(M_i)}, \overline{(\iota'^* (M_{i+1}))}_{uv}) = \cN_\sigma(\overline M_1, \dots, \overline{\pi'_*(M_i)}, \overline{(\iota'^* (M_{i+1}))}_{uv})
\]
as $\pi' \circ \iota'' = 0$ and both projections and immersions commute with activations.
From Lemma~\ref{lemma:intersection} we get
\begin{align*}   
&\cI(\cN_\sigma(\overline M_1, \dots, \overline{\pi'_*(M_i)} + \overline{\pi''_*(M_i)}, (\overline{\iota'^* (M_{i+1})} + \overline{\iota''^* (M_{i+1})})_{uv})) = \\
&\quad=\cI(\cN_\sigma(\overline M_1, \dots, \overline{\pi'_*(M_i)}, \overline{(\iota'^* (M_{i+1}))}_{uv}) \cap \cI( \cN_\sigma(\overline M_1, \dots, \overline{\pi''_*(M_i)}, \overline{(\iota''^* (M_{i+1}))}_{uv}).
\end{align*}
Combining all the above results, we conclude the proof of the theorem.
\end{proof}

\begin{proof}[Proof of Theorem~\ref{th:width}]
    Theorem~\ref{th:width} is a consequence of Theorem~\ref{th:width-strong} in the case where $M_i$ is the full set $\Aff_G(V_{i-1}, V_{i})$.
    Note that
    \[
    \pi'_* \Aff_G(V_{i}, V'_{i} \oplus V''_{i}) = \Aff_G(V_{i-1}, V'_{i})
    \]
    and
    \[
    \iota'^*\Aff_G(V'_{i} \oplus V''_{i}, V_{i+1}) = \Aff_G(V'_{i}, V_{i+1}).
    \]
    Similarly, for $\pi''_*$ and $\iota''^*$.
\end{proof}

\subsection{The Role of Representation Type}
\label{sec:immersion-theorem}

\begin{proof}[Proof of Theorem~\ref{th:inclusion}]
Write $H/K = \{ h_1K,\dots, h_sK \}$, we have the following injection
    \begin{equation}     
    \label{equation:def_iota}
     \iota: \quad
        \begin{gathered}
            \R^{G / H} \longrightarrow \R^{G / K} \\
            e_{gH} \mapsto \frac{1}{s} \sum_{i = 1}^s e_{gh_iK} \\
        \end{gathered}
    \end{equation}

and projection
    \begin{equation}     
    \label{equation:def_pi}
     \pi: \quad
        \begin{gathered}
            \R^{G / K} \longrightarrow \R^{G / H} \\
            e_{gK} \mapsto e_{gH}. \\
        \end{gathered}
    \end{equation}

Note that $\pi \iota = id_{\R^{G/ H}}$, indeed,
\[
\pi \iota (e_{gH})= 
\frac{1}{s} \sum_{i = 1}^s \pi(e_{gh_iK}) =
\frac{1}{s} \sum_{i = 1}^s e_{gh_iH} =
e_{gH},
\]
as $gh_iH = gH$ for each $i = 1, \dots, s$.

Consider the following diagram
\[
\begin{tikzcd}
\eta: V \xlongrightarrow[]{} \cdots \xlongrightarrow[]{\phi} \R^{G/H} \xlongrightarrow[]{\sigma_H} \arrow[d,shift left=-4.5ex,"\iota"] \R^{G/H} \xlongrightarrow[]{\psi}\arrow[d,shift left=8.5ex,"\iota"] \cdots \xlongrightarrow[]{} W \\
\eta': V \xlongrightarrow[]{} \cdots \xlongrightarrow[]{\phi'} \R^{G/K} \xlongrightarrow[]{\sigma'_H}  \arrow[u,shift left=5.5ex,"\pi"] \R^{G/K} \xlongrightarrow[]{\psi'} \arrow[u,shift left=-7.5ex,"\pi"] \cdots \xlongrightarrow[]{} W.
\end{tikzcd}
\]
From the network $\eta$ in $\cN_\sigma(V, \dots, \R^{G/H}, \dots, W)$ composed by $\phi$, $\psi$, and $\sigma$ we want construct a new representation $\eta'$ defined as follows.
Let $\phi'= \iota \smcirc \phi$, $\psi'= \psi \smcirc \pi$, and $\tilde \sigma' = \iota \smcirc \tilde \sigma \smcirc \pi$ and note that $\psi' \smcirc \tilde \sigma' \smcirc \phi' = \psi \smcirc \pi \smcirc \iota \smcirc \tilde \sigma \smcirc \pi \smcirc \iota \smcirc \phi = \psi \smcirc \tilde \sigma \smcirc \phi$. 
Hence, substituting $\psi \smcirc \tilde \sigma \smcirc \phi$ with $\psi' \smcirc \tilde \sigma' \smcirc \phi'$ inside the definition of $\eta$ do not change the function, and embeds it into a parameter space with intermediate representation $\R^{G/K}$ instead of $\R^{G/H}$. 
But to prove that $\eta$ is a neural network, we need to prove that $\tilde \sigma'$ is a point-wise activation function for some real-valued function $\sigma'$.

If $\tilde \sigma$ is a point-wise activation associated to $\sigma: \R \rightarrow \R$ defined on $\R^{G/ H}$ we have that
\[
\tilde \sigma (\sum_{gH \in G/H} a_{gH} e_{gH}) = \sum_{gH \in G/H} \sigma(a_{gH}) e_{gH}.
\]
On the other hand, we have
\[
\tilde \sigma'(\sum_{gK \in G/K} a_{gK} e_{gK}) = \iota \smcirc \tilde \sigma \smcirc \pi (\sum_{gK \in G/K} a_{gK} e_{gK}) = 
\]
\[
\iota \smcirc \tilde \sigma (\sum_{\substack{gH \in G/H \\ ghK \in gH/K}} a_{ghK} e_{gH}) = \iota \sum_{gH \in G/H} \sigma ( \sum_{ghK \in gH/K}a_{ghK}) e_{gH} =
\]
\[
\frac{1}{s} \sum_{gH \in G/H} \sigma ( \sum_{ghK \in gH/K}a_{ghK})  \sum_{hK \in H/K} e_{ghK} =
\]
\[
\frac{1}{s} \sum_{gK \in G/K} \sigma( \sum_{hK \in H/K} a_{ghK} ) e_{gK}.
\]
Note that the map
\[
\alpha: \sum_{gK \in G/K} a_{gK} e_{gK} \mapsto \sum_{hK \in H/K} a_{ghK} e_{gK}
\]
is linear and $G$-equivariant.
In particular, note that $\tilde \sigma' = \frac{\tilde \sigma_K \smcirc \alpha}{s}$, where we denote the standard point-wise activation induced by $\sigma$ on $\R^{G/ K}$ as $\tilde\sigma_K$, to distinguish it from $\tilde \sigma$, the point-wise activation induced by $\sigma$ but defined on $\R^{G/H}$.
Hence, substituting $\psi \smcirc \tilde \sigma \smcirc \phi$ with $\psi' \smcirc \tilde \sigma' \smcirc \phi' = \psi' \smcirc \frac{\tilde \sigma_K \smcirc \alpha}{s} \smcirc \phi'$, we obtain an immersion of $\eta$ in $\cN_\sigma (V, \dots, \R^{G/K}, \dots, W)$. Hence $\cN_\sigma (V, \dots, \R^{G/H}, \dots, W) \subseteq \cN_\sigma (V, \dots, \R^{G/K}, \dots, W)$ and $\rho(\cN_\sigma (V, \dots, \R^{G/K}, \dots, W)) \subseteq \rho(\cN_\sigma (V, \dots, \R^{G/H}, \dots, W))$
\end{proof}

We are now going to develop the tools to prove Proposition~\ref{prop:regular_hidden}.

\begin{lemma}
\label{lemma:shift}
Let $M = \Aff_G(V,\R^{G})$, then $\cI(M_{u, v}) = \cI(M_{v^{-1}u, e}) = \cI(M_{uv^{-1}, e})$. Moreover,
$(\beta, \beta') \in \cI(M_{g, e})$ if and only if $g \beta = \beta'$.
\end{lemma}

\begin{proof}
    Let $V = V_1 \oplus \cdots \oplus V_s$ where $V_i = \R^{G/K_i}$ for each $i = 1, \dots, s$.
    By Proposition~\ref{prop:convolution_is_all_you_need} and setting $H = \{ e \}$, we know that $\Hom_G (V, \R^G) = \Hom_G (\R^{G/K_1}, \R^G) \oplus \cdots \oplus (\R^{G/K_s}, \R^G)$ is generated by functions $\cR_{g_i K_i} \pi_{G/K_i}$ for each $g_i K_i \in G/ K_i$ for each $i =1, \dots, s$, and $ \pi_{G/K_i}$ is the projection of $V$ onto $V_i = \R^{G/K_i}$. 
    Moreover,
        \[ \Big( \cR_{g_i K_i} \pi_{G/K_i} (\beta) \Big)_{u} = \Big( \cR_{g_i K_i} \pi_{G/K_i} \Big(\sum_{kK_i \in G/K_i} \beta_{kK_i} e_{kK_i} \Big) \Big)_{u} = \frac{1}{|K_i|} \beta_{ug_iK_i}.
        \]
    For each $g \in G$, we have that
    \begin{align*}   
    \cI(M_{u, v})  = \\
    \left\{ (\beta, \beta') \mid \Big( \cR_{g_i K_i} \pi_{G/K_i} (\beta) \Big)_{u} - \Big( \cR_{g_i K_i} \pi_{G/K_i} (\beta') \Big)_{v} = 0 \; \forall i\forall g_i K_i \in G/K_i \right\} = \\
    \left\{ (\beta, \beta') \mid \beta_{ug_iK_i} - \beta'_{vg_iK_i} = 0 \; \forall i \forall g_i K_i \in G/K_i \right\} = \\
    \left\{ (\beta, \beta') \mid v^{-1}u \beta = \beta' \right\}.
    \end{align*}

    In particular, we have that $\cI(M_{u, v}) = \cI(M_{v^{-1}u, e})$. Hence,
    $(\beta, \beta') \in \cI(M_{g, e})$ if and only if $g \beta = \beta'$.

    Finally, in a similar way, we are able to observe that $\cI(M_{u, v})  = \cI(M_{uv^{-1}, e})$.

\end{proof}

\begin{proof}[Proof of Proposition~\ref{prop:regular_hidden}]

In what follows we have to consider $G \sqcup G$, to distinguish the two distinct copies of $G$, we denote $G'$ as the second copy of $G$ and, and when $g$ is an element of $G$, we will indicate as $g'$ the analogous element in $G'$.

Define $M = \overline{\Aff_G(V,\R^{G})}$ and $N = \Aff_G( \R^{G}, \R^{G/H})' < \Hom_G( \R^{G} \oplus \R^{G'}, \R^{G/H})$.
Proposition~\ref{prop:equivalence} implies
\[
\rho (\cN_\sigma(V, \R^G, \R)) = \cI(\cN_\sigma(M, N)).
\]

Note that $N = \langle \cR_{Hg} - \cR_{H'g} \rangle_{Hg \in H \backslash G}$ where functions $\cR_{Hg}$ are defined as 
\[
\Big( \cR_{Hg} (e_{k}) \Big)_{sH} = 
    \begin{cases} 
        1 & \text{if } s \in kg^{-1}H, \\
        0 & \text{otherwise}.
    \end{cases}
\]

An element $\cQ$ in $\Psi_{Hg, sH}$ is a partition of $G \sqcup G'$, where for each $P \in \cQ$ the intersection $P \cap sHg \sqcup sH'g$ have the same number of elements in $sHg$ and $sH'g$. Due to Remark~\ref{rmk:pruning}, we can just consider $\Psi'_{Hg, sH}$ containing the partitions of $G \sqcup G'$ whose only parts are $P = \{ u, v \}$ for $u \in sHg$ and $v \in sH 'g$, otherwise $P$ is a singleton not containing elements in $sHg$ or $sH'g$.

Hence, by Theorem~\ref{th:main},
\begin{equation}
\label{eq:intermediate_formulation}
\cI(\cN_\sigma(M, N)) = \bigcap_{Hg, sH} \bigcup_{\cQ \in \Psi'_{Hg, sH}} \bigcap_{ \{u, v\} \in \cQ}\cI(M_{u, v}).
\end{equation}

If we prove that for each $Hg$ and $sH$
    \begin{equation}
        \label{eq:optim_transitive}
        \bigcup_{\cQ \in \Psi_{Hg, sH}'} \bigcap_{ \{u, v\} \in \cQ}\cI(M_{u, v}) = \bigcup_{h \in H} \cI(M_{h, e}).
    \end{equation}
then we are done. Indeed, thanks to Lemma~\ref{lemma:shift},  $(\beta, \beta') \in \bigcup_{h \in H} \cI(M_{h, e})$ if and only if there exists some $h \in H$ such that $h \beta = \beta'$. Moreover, \labelcref{eq:optim_transitive} does not depend on $Hg$ and $sH$ then the outer intersection in \labelcref{eq:intermediate_formulation} is trivial.

Now to prove \labelcref{eq:optim_transitive}, we first show that
\[
    \bigcup_{\cQ \in \Psi_{Hg, sH}'} \bigcap_{ \{u, v\} \in \cQ}\cI(M_{u, v}) \subseteq \bigcup_{h \in H} \cI(M_{h, e}).
\]
Note that if $\{ u, v \} \in \cQ$ and $u, v \in sHg$, then, by Lemma~\ref{lemma:shift},
\[
\cI(M_{u, v}) = \cI(M_{shg, sh'g}) = \cI(M_{h h'^{-1}, e}).
\]
Therefore,
\[
\bigcap_{ \{u, v\} \in \cQ}\cI(M_{u, v}) \subseteq \cI(M_{u, v}) \subseteq \bigcup_{h \in H} \cI(M_{h, e}) .
\]
The right-hand side is independent of $\cQ$ then the union on each $\cQ$ in $\Psi_{Hg, sH}'$ of sets on the left-hand side proves the searched inclusion.

To prove the opposite inclusion, for each $h$ define $\cP_{h} \in \Psi_{Hg, sH}'$ as the partition containing the sets $\{ ghts, gts \}$ for each ${t \in H}$ and the remaining singletons. 
Then, note that, by Lemma~\ref{lemma:shift},
\[
\bigcap_{ \{ghts, gts\} \in \cP_{h} }\cI(M_{ghts, gts}) = \cI(M_{h, e}).
\]
Hence,
\[
\bigcup_{h \in H} \cI(M_{h, e}) = \bigcup_{h \in H} \bigcap_{ \{ghts, gts\} \in \cP_{h} }\cI(M_{ghts, gts}) \subseteq  \bigcup_{\cQ \in \Psi_{Hg, sH}'} \bigcap_{ \{u, v\} \in \cQ}\cI(M_{u, v}).
\]
This concludes the proof.
      
\end{proof}


\section{Implications on Practical Models}

\begin{lemma}
    \label{lemma:cnns}
    An element $(\alpha, \beta) \in \rho(1\text{-CNN})$ if and only if there exist a permutation of $\sigma \in S_n$ such that $\alpha_i = \beta_{\sigma(i)}$ for each $i = 1, \dots, n$.
\end{lemma}

\begin{proof}
    Write $[n] \sqcup [n]' = \{1, \dots, n, 1', \dots, n' \}$, and notice that $\Aff_{\Z_n}(\R^n, \R)' = \langle \mathbbm{1}_{[n]} - \mathbbm{1}_{[n]'} \rangle$, hence $\Psi'$ as defined in Remark \ref{rmk:pruning} is composed by partitions $\cQ$ of $[n] \sqcup [n]'$ such that
    \[
    \cQ = \{ \{ i, j' \} \mid i \in [n], j \in [n]' \}.
    \]
    Recall $M^1 = \langle id_{\R^{n} \oplus \R^{n'}} \rangle$.
    Note that $(\alpha, \beta) \in \cI(M^1_{i, j'}) = \langle id_{\R^{n} \oplus \R^{n'}} \rangle$ if and only if
    $\alpha_i = \beta_j$. 
    Moreover, for a given $\cQ$ in $\Psi'$, we have $(\alpha, \beta) \in \bigcap_{{i,j'} \in \cQ} \cI(M^1_{i, j'})$ if and only if, given the bijection $\sigma: [n] \to [n]'$ associating $i$ to $j'$, $\alpha_i = \beta_{\sigma(i)}$ for each $i = 1, \dots, n$.
    
    Notice that, by Theorem~\ref{th:main},
    \[
    (\alpha, \beta) \in  \rho(\cN_\sigma(M^1, \Aff_{\Z_n}(\R^n, \R)) =
    \bigcup_{\cQ \in \Psi} \bigcap_{{i,j'} \in \cQ} \cI(M^1_{i, j'}),
    \]
    which is equivalent at saying that there exist a permutation of $\sigma \in S_n$ such that $\alpha_i = \beta_{\sigma(i)}$ for each $i = 1, \dots, n$.
\end{proof}

\begin{proof}[Proof of Proposition~\ref{prop:cnns}]
    Note that $\rho(1\text{-CNN})$ is characterized by Lemma~\ref{lemma:cnns} as follows: $(\alpha, \beta) \in \rho(1\text{-CNN})$ if and only if there exists a permutation $\sigma \in S_n$ such that $\alpha_i = \beta_{\sigma(i)}$ for each $i = 1, \dots, n$. In contrast, Proposition~\ref{prop:regular_hidden} shows that $(\alpha, \beta) \in \rho(\cN_\sigma(M^n, \Aff_{\Z_n}(\R^n, \R)))$ if and only if there exists an element $g \in \Z_n$ such that $\alpha_i = \beta_{i+g \pmod n}$ for each $i = 1, \dots, n$. Notice that for $n > 1$, $\Z_n \lneq S_n$, hence $\rho(n\text{-CNN}) \subsetneq \rho(1\text{-CNN})$, as desired.
    The proof of the chain of inclusions
    \[
        \rho(n\text{-CNN}) \subseteq \cdots \subseteq \rho(2\text{-CNN}) \subseteq \rho(1\text{-CNN})
    \]
    is a direct consequence of Lemma~\ref{lemma:subset} since: $1\text{-CNN} \subseteq 2\text{-CNN} \subseteq \cdots \subseteq n\text{-CNN}$.
\end{proof}

\section{Functional Equations}
In this section, we introduce key results from the theory of functional equations that are necessary to prove Theorem~\ref{th:main}. 
A functional equation, by definition, is an identity involving unknown functions as variables, and common examples include differential and integral equations \citep{kannappan_functional_2009}. 
Here, we are particularly interested in the class of linear functional equations, which we explore in greater detail in the following section.

\subsection{Linear Functional Equations}
\label{section:linear-function-equations}
Linear functions equations are functional equations which, for given $a_i \in \R$ and $b_i \in \R^d$, are defined by
\[
\sum_{i = 1}^n a_i \sigma(b_ix) = 0 \qquad\qquad\qquad ( \forall x \in \R^d).
\]
In particular, Theorem~\ref{th:kiss-fundamental} is a fundamental tool in the proof of Theorem~\ref{th:main}, since it characterizes the set of parameters $b_1, \dots, b_n$ for which the specific case of linear functional equation in \labelcref{eq:kiss-fundamental-gen} is always satisfied for a non-polynomial  $\sigma$ and arbitrary  $a_1, \dots, a_n \in \R$.

\begin{theorem}
    \label{th:kiss-fundamental}
    Let $\sigma: \R \to \R$ be a non-polynomial continuous function and $a_1, \dots, a_n \in \R$. Let $\cP$ be a partition of $[n]$ and define
    \[
    \Psi = \{ \cQ \leq \cP | \sum_{i \in P} a_i = 0 \; \forall P \in \cQ \}.
    \]
    The set $B$ of elements $b = (b_1, \dots, b_n) \in \R^{n \times m}$ which satisfy 
    \begin{equation}
    \label{eq:kiss-fundamental-gen}
    \sum_{P \in \cP}\sum_{i \in P} a_i\sigma \Big( b_i \cdot x + y_{P} \Big) = 0 \qquad \left(\forall x \in \R^m \forall y = (y_P)_{P \in \cP} \in \R^{\cP} \right)
    \end{equation}
    is
    \begin{equation}
    \label{eq:descript}
    \bigcup_{\cQ \in \Psi} \{ (b_1, \dots, b_n) \mid b_{i_1} = \cdots = b_{i_k} \; \forall \{ i_1, \dots, i_k \} \in \cQ \}.
    \end{equation}
    Equivalently,
    \[
    B = \bigcup_{\cQ \in \Psi} \bigcap_{\substack{P \in \cQ \\ i,j \in P}} \{ (b_1, \dots, b_n) \mid b_i - b_j = 0 \}.
    \]
    For arbitrary continuous functions $\sigma$, it is only true that the set defined in \labelcref{eq:descript} is contained in $B$.
\end{theorem}

To prove Theorem~\ref{th:kiss-fundamental}, we first need to prove some auxiliary results. Theorem~\ref{th:kiss-actual}, stated below, is a reformulation of Theorem 2.27 in \cite{kiss_linear_2014} adapted here to the context of continuous real functions for convenience.
For further discussion, refer to Appendix~\ref{section:generalized-polynomials}.

\begin{theorem}
\label{th:kiss-actual}
Let $a_1, \dots, a_n$ non-null real values, and let $b_1, \dots, b_n  \in \R^m$ be distinct real vectors. Continuous solutions $\sigma: \R \rightarrow \R$ of
\begin{equation}
\label{eq:functional_basic}
\sum_{i} a_{i} \sigma \Big( b_{i} \cdot x + y \Big) = 0 \qquad \left( \forall x \in \R^m \forall y \in \R \right)
\end{equation}
are polynomial.
\end{theorem}

Moreover, to prove Theorem~\ref{th:kiss-fundamental}, the following notions and auxiliary results are required.

\begin{definition}
    Let $b = (b_1, \dots, b_n) \in \R^{n \times m}$, the \emph{identity pattern} of $b$ is the coarser partition $\cP$ of $[n]$ such that $b_i = b_j$ for each $i, j \in P$ and $P \in \cP$.
\end{definition}

\begin{theorem}
    \label{th:kiss-basic}
    Let $\sigma: \R \to \R$ be a non-polynomial continuous function, and $a_1, \dots, a_n \in \R$. Then $b = (b_1, \dots, b_n) \in \R^{n \times m}$ satisfies
    \begin{equation}
    \label{eq:kiss-fundamental}
    \sum_{i = 1}^n a_{i} \sigma \Big( b_{i} \cdot x + y \Big) = 0 \qquad \left( \forall x \in \R^m \forall y \in \R \right)
    \end{equation}
    if and only if $\sum_{i \in P} a_i = 0$ for each $P$ in the identity pattern of $b$.
\end{theorem}

\begin{proof}
    Let $P_1, \dots, P_q$ be the parts in the identity pattern of $b$ such that $\sum_{i \in P_j} a_i~\neq~0$, define $a'_j = \sum_{i \in P_j} a_i$ then we can rewrite the equation in \labelcref{eq:kiss-fundamental} as
    \[
    \sum_{j=1}^q a'_{j} \sigma \Big( b'_{j} \cdot x + y \Big) = 0,
    \]
    where, for each $j=1, \dots, q$, the value of $b_j'$ is set to the value of the $b_i$s for $i\in P_j$, which are all equal to each other.
    Since the $a_j'$ are non-null and $b_j'$ are distinct, by Theorem~\ref{th:kiss-actual}, $\sigma$ have to be polynomial which is impossible.
    To prove the opposite implication, let $\cP$ be the identity pattern of $b$ and write
    \[
    \sum_{i = 1}^n a_{i} \sigma \Big( b_{i} \cdot x + y \Big) = \sum_{P \in \cP} \sum_{i \in P} a_{i} \sigma \Big( b_{i} \cdot x + y \Big) = \sum_{P \in \cP} \Big( \sum_{i \in P} a_{i} \Big) \sigma \Big( b_{i} \cdot x + y \Big),
    \]
    where the last equality is possible because $b_i = b_j$ for each $i, j \in P$.
\end{proof}

\begin{remark}
\label{rmk:only-if}
    Note that the second implication of Theorem~\ref{th:kiss-basic} holds for any  $\sigma$, including polynomial functions.
\end{remark}

This theorem gives the following corollary, which is the one actually needed to prove Theorem~\ref{th:kiss-fundamental}.

\begin{corollary}
    \label{cor:kiss_fundamental}
    Let $\sigma: \R \to \R$ be a non-polynomial continuous function and $a_1, \dots, a_n \in \R$. Let $\cP_n$ be the set of all partition of $[n]$ and define
    \[
    \Phi = \{ \cP \in \cP_n \mid \sum_{i \in P} a_i = 0 \; \forall P \in \cP \}.
    \]
    The set $B$ of elements $b = (b_1, \dots, b_n) \in \R^{n \times m}$ satisfying \labelcref{eq:kiss-fundamental} is
    \begin{equation}
    \label{eq:particular-case-descript}
    \bigcup_{\cP \in \Phi} \{ (b_1, \dots, b_n) \mid b_{i_1} = \cdots = b_{i_k} \; \forall \{ i_1, \dots, i_k \} \in \cP \}.
    \end{equation}
    Or equivalently,
    \[
    B = \bigcup_{\cP \in \Phi} \bigcap_{\substack{P \in \cP \\ i,j \in P}} \{ (b_1, \dots, b_n) \mid b_i - b_j = 0 \}.
    \]
    For arbitrary continuous functions $\sigma$, it is only true that the set defined in \labelcref{eq:particular-case-descript} is contained in $B$.
\end{corollary}

\begin{proof}
    Define 
    \[
    B' = \bigcup_{\cP \in \Phi} \{ (b_1, \dots, b_n) \mid b_{i_1} = \cdots = b_{i_k} \; \forall \{ i_1, \dots, i_k \} \in \cP \}.
    \]
    By Theorem~\ref{th:kiss-basic}, $b$ satisfies \labelcref{eq:kiss-fundamental} if and only if $\sum_{i \in P} a_i = 0$ for each $P$ in the identity pattern of $b$. Thus, $B \subseteq B'$.
    To prove the opposite inclusion, note that if $b = (b_1, \dots, b_n) \in B'$ then there exist $\cP \in \Phi$ such that $b$ has identity pattern $\cP$, then, as $\sum_{i\in P} a_i = 0$ for each $P \in \cP$, \labelcref{eq:kiss-fundamental} is verified. 
    Finally, note that this implication holds for any $\sigma$ by Remark~\ref{rmk:only-if}, proving the last claim in Corollary~\ref{cor:kiss_fundamental}.
\end{proof}

\begin{proof}[Proof of Theorem~\ref{th:kiss-fundamental}] 
Notice that the problem
\begin{equation}
\label{eq:kiss_part}
    \sum_{P \in \cP}\sum_{i \in P} a_i\sigma \Big( b_i \cdot x + y_{P} \Big) = 0 \qquad \left(\forall x \in \R^m \forall y = (y_P)_{P \in \cP} \in \R^{\cP} \right)
\end{equation}
is equivalent to
\begin{equation*}
    \sum_{P \in \cP}\sum_{i \in P} a_i\sigma \Big( b_i \cdot x + y_{P} + \hat y\Big) = 0 \quad \left(\forall x \in \R^m \forall y \in \R^{\cP} \forall \hat y \in \R \right)
\end{equation*}
through the change of variables $y_P \mapsto y_P + \hat y$ for each $P \in \cP$. This problem is in turn equivalent to
\begin{equation}
\label{eq:kiss_reduced}
\sum_{i = 1}^n a_{i} \sigma \Big( \hat b_{i} \cdot \hat x + \hat y \Big) = 0 \qquad \left( \forall \hat x \in \R^m \oplus \R^{\cP} \forall \hat y \in \R \right)
\end{equation}
due to the following change of variables
\[
\hat b_i \mapsto \begin{pmatrix*}
    b_i \\
    e_P
\end{pmatrix*} \text{ for $i \in P$, and }
\hat x \mapsto \begin{pmatrix*}
    x \\
    y_P e_P
\end{pmatrix*}\text{,}
\]
where $\{ e_P \}_{P \in \cP}$ is the canonical base of $\R^{\cP}$. 
Corollary~\ref{cor:kiss_fundamental} implies that the solution to \labelcref{eq:kiss_reduced} is
\begin{equation}
    \label{eq:hat-notation}
    \bigcup_{\cQ \in \Phi} \{ (\hat b_1, \dots, \hat b_n) \mid \hat b_{i_1} = \cdots = \hat b_{i_k} \; \forall \{ i_1, \dots, i_k \} \in \cQ \}.
\end{equation}
Note that $\hat b_{i_1} = \cdots = \hat b_{i_k}$ if and only if $ b_{i_1} = \cdots = b_{i_k}$ and $\{ i_1, \dots, i_k \} \subseteq P$ for some $P \in \cP$ if and only if $ b_{i_1} = \cdots = b_{i_k}$ and $\{ i_1, \dots, i_k \} \in \cQ$ for some $\cQ \leq \cP$. 

Recall the definitions
\[
    \Phi = \{ \cQ \in \cP_n \mid \sum_{i \in P} a_i = 0 \; \forall P \in \cQ \} \text{ and } \Psi = \{ \cQ \leq \cP \mid \sum_{i \in P} a_i = 0 \; \forall P \in \cQ \}.
\]
Noting that $\Psi = \{ \cQ \in \Phi \mid \cQ \leq \cP \}$, equation \labelcref{eq:hat-notation} implies that the solutions of \labelcref{eq:kiss_part} are 
\[ 
    \bigcup_{\cQ \in \Psi} \{ (b_1, \dots, b_n) \mid  b_{i_1} = \cdots =  b_{i_k} \; \forall \{ i_1, \dots, i_k \} \in \cQ \}.
\]

The final claim follows directly from the concluding statement in Corollary~\ref{cor:kiss_fundamental}.
\end{proof}

\begin{remark}
    \label{rmk:pruning}
    In Theorem~\ref{th:kiss-fundamental} the union
    \begin{equation}
    \label{eq:element} 
    \bigcup_{\cQ \in \Psi} \{ (b_1, \dots, b_n) \mid b_{i_1} = \cdots = b_{i_k} \; \forall \{ i_1, \dots, i_k \} \in \cQ \}
    \end{equation}
    has redundancies. Indeed,
    $\{ (b_1, \dots, b_n) \mid b_{i_1} = \cdots = b_{i_k} \; \forall \{ i_1, \dots, i_k \} \in \cQ \}$ is contained in $\{ (b_1, \dots, b_n) \mid b_{i_1} = \cdots = b_{i_k} \; \forall \{ i_1, \dots, i_k \} \in \cR \}$ for each $\cQ \leq \cR$ finer partitions of $\cP$. Hence, the set defined by \labelcref{eq:element} is the same as
    \[
    \bigcup_{\cQ \in \Psi'} \{ (b_1, \dots, b_n) \mid b_{i_1} = \cdots = b_{i_k} \; \forall \{ i_1, \dots, i_k \} \in \cQ \}
    \]
    where $\Psi'$ is the subset of $\Psi$ containing only the minimal partitions with respect to the refinement ordering.
\end{remark}

\subsection{Generalized Polynomials in the Continuous Case}
\label{section:generalized-polynomials}

In Appendix~\ref{section:linear-function-equations}, we employ Theorem~\ref{th:kiss-actual}, a reformulation of Theorem 2.27 in \cite{kiss_linear_2014}, adapted here for convenience to the context of continuous real functions. 
In particular, the original version of this theorem proves that arbitrary complex functions satisfying \labelcref{eq:functional_basic} are generalized polynomials, defined as follows.

\begin{definition}
A function $\sigma: \C \to \C$ is a \emph{generalized monomial} function if there exist a symmetric function $F: \C^n \to \C$ additive in each of its variables, such that $\sigma(x) = F(x, \dots, x)$ for each $x \in \C$.
A function $\sigma: \C \to \C$ is a \emph{generalized polynomial} if it is a finite sum of generalized monomials, we say that a generalized polynomial $\sigma$ is \emph{real} if $\sigma$ is real and there exists a symmetric function $F: \C^n \to \R$ additive in each of its variables, such that $\sigma(x) = F(x, \dots, x)$ for each $x \in \R$.
\end{definition}

Trivially, since complex functions satisfying \labelcref{eq:functional_basic} are generalized polynomials, any real solutions are real generalized polynomials.

To conclude the proof of Theorem~\ref{th:kiss-actual}, it remains to show that continuous real generalized polynomials are simply real polynomial functions, as shown by Proposition~\ref{prop:cont-2-generalized}.

\begin{prop}
    \label{prop:cont-2-generalized}
    A real continuous generalized polynomial is a real polynomial function.
\end{prop}

\begin{proof}
    The proof of Theorem~\ref{prop:cont-2-generalized} will be analogous to the proof of the classical proof that any continuous real additive function is linear, see Theorem 1.1 in \cite{kannappan_functional_2009}.

    First, we show that real generalized monomials are monomial functions on rational numbers.

    Indeed, suppose first that $f$ is a  real generalized monomial and let $F: \R^n \to \R$ be the symmetric function additive in each variable and such that $f(x) = F(x, \dots, x)$ for each $x \in \R$.
    Note that for each $r \in \N$,
    \begin{equation}
        \begin{aligned}[b]
            F(x_1, \dots, rx_i, \dots, x_n) = F(x_1, \dots, x_i + \cdots + x_i, \dots, x_n) = \\
            F(x_1, \dots, x_i, \dots, x_n) + \cdots + F(x_1, \dots, x_i, \dots, x_n) = r F(x_1, \dots, x_i, \dots, x_n).
        \end{aligned}
    \label{eq:natural-homogen}
    \end{equation}
    Note that $F(x_1, \dots, x_{i-1}, 0, x_{i+1}, \dots, x_n) = 0$, indeed
    \begin{equation}
        \begin{aligned}
            F(x_1, \dots, x_{i-1}, 0, x_{i+1}, \dots, x_n) = \\
            F(x_1, \dots, x_{i-1}, 0 + 0, x_{i+1}, \dots, x_n) = \\
            F(x_1, \dots, x_{i-1}, 0, x_{i+1}, \dots, x_n) + F(x_1, \dots, x_{i-1}, 0, x_{i+1}, \dots, x_n)
        \end{aligned}
    \label{eq:trivial}
    \end{equation}
    Eliminating a term $F(x_1, \dots, x_{i-1}, 0, x_{i+1}, \dots, x_n)$ from both the sides of \labelcref{eq:trivial}, we get the required result.
    
    Furthermore, $F(x_1, \dots, x_i, \dots, x_n) = - F(x_1, \dots, -x_i, \dots, x_n)$. Indeed,
    \begin{equation}
        \label{eq:opposites}
        F(x_1, \dots, x_i, \dots, x_n) + F(x_1, \dots, - x_i, \dots, x_n) = F(x_1, \dots, x_i - x_i, \dots, x_n) = 0.
    \end{equation}
    Equations \labelcref{eq:natural-homogen} and \labelcref{eq:opposites} yields
    \begin{equation}
        \label{eq:integer-homogen}
        F(x_1, \dots, rx_i, \dots, x_n) = rF(x_1, \dots, x_i, \dots, x_n)
    \end{equation}
    for each $r \in \Z$.
    Note that by substituting $rx_i = y_i$, we obtain
    \begin{equation*}
        F(x_1, \dots, y_i, \dots, x_n) = rF(x_1, \dots, \frac{1}{r} y_i, \dots, x_n)    
    \end{equation*}
    Equivalently,
    \begin{equation}
        \label{eq:fractions}
        F(x_1, \dots, \frac{1}{r} y_i, \dots, x_n) = \frac{1}{r} F(x_1, \dots, y_i, \dots, x_n)
    \end{equation}
    Equations \labelcref{eq:integer-homogen} and \labelcref{eq:fractions} prove
    \begin{equation}
        \label{eq:rational-homogen}
        F(x, \dots, rx, \dots, x) = rF(x, \dots, x)
    \end{equation}
    for each $r \in \Q$.
    Hence,
    \begin{equation*}
        f(rx) = F(rx, \dots, rx) = r^n F(x, \dots, x) = r^n f(x).
    \end{equation*}
    for each $r \in \Q$.
    In particular set $x = 1$ and $f(1) = c \in \R$,
    \begin{equation*}
        f(r) = r^n f(1) = c r^n.
    \end{equation*}
    Hence, a real generalized monomial is a monomial on $\Q$. 
    
    Finally, we can prove the general case where $f$ is a real generalized polynomial. Recalling that real generalized polynomials are sums of real generalized monomials, they are sums of real monomial functions on $\Q$, namely polynomial functions on $\Q$. 
    
    We conclude by noting that, since $f$ is continuous, it extends as a polynomial function on $\R$ due to continuity. 
\end{proof}

\section{Technical Lemmas}
\label{section:technical-results}

In what follows, let $\cC$, $\cD$ and $\cF$ be families of functions in $\cC(X, V)$, where $X$ is a topological space and $V$ a real vector space.

\begin{lemma}
    \label{lemma:subset}
    If $\cC \subseteq \cD$, then $\rho(\cD) \subseteq \rho(\cC)$.
\end{lemma}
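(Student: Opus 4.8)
The plan is to argue directly from the definition of the identification relation in~\eqref{eq:rho}, since the statement is a pure monotonicity property: enlarging the family of functions can only shrink the set of pairs it identifies. The key observation is that a pair $(\alpha, \beta)$ lies in $\rho(\cD)$ precisely when \emph{every} function in $\cD$ fails to separate $\alpha$ and $\beta$, so imposing the identification condition over the larger family $\cD$ is a logically stronger requirement than imposing it over the subfamily $\cC$.

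Concretely, I would begin by fixing an arbitrary pair $(\alpha, \beta) \in \rho(\cD)$. By the definition of $\rho$, this means $f(\alpha) = f(\beta)$ for every $f \in \cD$. Since $\cC \subseteq \cD$, any $f \in \cC$ is in particular an element of $\cD$, so the equality $f(\alpha) = f(\beta)$ holds for every $f \in \cC$ as well. This is exactly the condition defining membership in $\rho(\cC)$, whence $(\alpha, \beta) \in \rho(\cC)$. As the pair was arbitrary, this establishes the desired inclusion $\rho(\cD) \subseteq \rho(\cC)$.

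There is essentially no obstacle here: the result is an immediate consequence of the definition together with the fact that restricting a universally quantified condition to a subset preserves its validity. The only conceptual point worth emphasizing is that $\rho$ is \emph{order-reversing} with respect to inclusion, which is the intuitively correct behaviour for an identification relation (more functions available means more opportunities to separate), and this antitonicity is precisely what makes the lemma a convenient building block for the recursive and intersection-based arguments appearing later, such as in Theorem~\ref{th:main} and Theorem~\ref{th:width}.
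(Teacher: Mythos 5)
Your proof is correct: it is the immediate definitional argument (restricting a universally quantified condition from $\cD$ to its subset $\cC$), which is precisely why the paper states Lemma~\ref{lemma:subset} without proof. Nothing is missing, and your remark on the order-reversing nature of $\rho$ accurately captures how the lemma is used later.
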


\begin{lemma}
    \label{lemma:intersection}
    Let $\cC$ and $\cD$ be two families of real-valued functions such that each of them contains at least a constant function. The equivalence relations induced by their identification condition are linked by the following conditions $\rho(\cC + \cD) = \rho(\cC \cup \cD) = \rho(\cC) \cap \rho(\cD)$.
\end{lemma}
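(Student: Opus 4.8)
The plan is to prove the chain of equalities in two independent pieces, working directly from the definition \eqref{eq:rho}. For the equality $\rho(\cC \cup \cD) = \rho(\cC) \cap \rho(\cD)$, I would simply unwind the universal quantifier: a pair $(\alpha, \beta)$ lies in $\rho(\cC \cup \cD)$ exactly when $f(\alpha) = f(\beta)$ for every $f \in \cC \cup \cD$, and since ranging over $\cC \cup \cD$ is the same as ranging over $\cC$ and over $\cD$ separately, this condition splits into the conjunction that defines $\rho(\cC) \cap \rho(\cD)$. No hypothesis on constant functions is needed for this part.

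The substantive part is $\rho(\cC + \cD) = \rho(\cC) \cap \rho(\cD)$, which I would establish by double inclusion. For the inclusion $\rho(\cC) \cap \rho(\cD) \subseteq \rho(\cC + \cD)$, I would take a pair $(\alpha, \beta)$ identified by every element of both families and invoke additivity of evaluation: for $f \in \cC$ and $g \in \cD$ we have $(f+g)(\alpha) = f(\alpha) + g(\alpha) = f(\beta) + g(\beta) = (f+g)(\beta)$, so the sum $f+g$ identifies the pair as well. This direction likewise needs no constant-function assumption.

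The reverse inclusion $\rho(\cC + \cD) \subseteq \rho(\cC) \cap \rho(\cD)$ is where the hypothesis enters and is the only genuinely delicate step. Given $(\alpha, \beta) \in \rho(\cC + \cD)$, I would fix an arbitrary $f \in \cC$ and select the constant function $c \in \cD$ guaranteed by hypothesis; then $f + c \in \cC + \cD$ identifies $(\alpha, \beta)$, and since $c(\alpha) = c(\beta)$ the constant cancels, forcing $f(\alpha) = f(\beta)$. As $f$ was arbitrary, this shows $(\alpha, \beta) \in \rho(\cC)$, and a symmetric argument using the constant function in $\cC$ gives $(\alpha, \beta) \in \rho(\cD)$. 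The main obstacle, such as it is, lies precisely here: without a constant summand available in the opposite family one cannot recover an individual $f \in \cC$ as the identification-relevant part of an element of $\cC + \cD$, so the constant functions are exactly what allow the sum to be ``undone'' at the level of the identification relation. Combining the three inclusions then yields the full chain $\rho(\cC + \cD) = \rho(\cC \cup \cD) = \rho(\cC) \cap \rho(\cD)$.
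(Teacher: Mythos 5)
Your proof is correct and takes essentially the same route as the paper: the equality $\rho(\cC \cup \cD) = \rho(\cC) \cap \rho(\cD)$ by unwinding the universal quantifier, and the constant-function cancellation to relate $\rho(\cC + \cD)$ to the individual families. If anything, your version is the more complete write-up: the paper's proof only spells out the constant-function direction (via $\rho(\cC + \cD) \subseteq \rho(\cC + c) = \rho(\cC)$ and a contrapositive separation argument), leaving the additivity inclusion $\rho(\cC) \cap \rho(\cD) \subseteq \rho(\cC + \cD)$ essentially implicit, whereas you state and prove it explicitly.
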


\begin{proof}
Let us prove the first equality.
Let $c$ be the constant function in $\cD$. Hence $\rho(\cC + \cD) \subseteq \rho(\cC + c) = \rho(\cC) \subseteq \rho(\cC) \cup \rho(\cD)$. To prove the inverse inclusion, suppose there exists a function $f$ either in $\cC$ or $\cD$ separating $x$ and $y$. Without loss of generality, suppose $f \in \cC$, $f+c \in \cC + \cD$ would be separating $x$ and $y$. This conclude the proof of the first equality.
The proof of the second equality follows from the definition of $\rho$. Indeed,
\begin{align*}
\rho(\cC \cup \cD) = \{ (x, y) \in X \times X \mid f(x) = f(y) \, \forall f \in \cC \cup \cD\} = \\
\{ (x, y) \in X \times X \mid f(x) = f(y) \, \forall f \in \cC \} \cap \{ (x, y) \in X \times X \mid f(x) = f(y) \, \forall f \in \cD\} = \\
\rho(\cC) \cap \rho(\cD).
\end{align*}

\end{proof}



\begin{remark}
    Note that, with slight modifications to the proofs, analogous results to all previous lemmas can be derived by substituting $\rho$ with $\cI$.
\end{remark}

\begin{lemma}
\label{lemma:basis}
If $\cF_d$ is a set spanning a null-bias space $M_d$, then
    \[
        \cI(\cN_\sigma (M_1, \dots, M_d)) = \cI(\cN_\sigma (M_1, \dots, M_{d-1}, \cF_d)).
    \]
\end{lemma}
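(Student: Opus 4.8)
The plan is to prove the two inclusions separately and then glue them together using the technical results of Appendix~\ref{subsection:technical-results}, reducing the whole statement to the single observation (recorded just before the lemma) that a generating set of the final constraint space generates the entire network family as a linear span.

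The first inclusion will be immediate. Since $\cM$ is a set of generators of the subspace $M_d$, in particular $\cM \subseteq M_d$, and hence $\cN_\sigma(M_1, \dots, M_{d-1}, \cM) \subseteq \cN_\sigma(M_1, \dots, M_d)$. Applying Lemma~\ref{lemma:subset} then yields $\rho(\cN_\sigma(M_1, \dots, M_d)) \subseteq \rho(\cN_\sigma(M_1, \dots, M_{d-1}, \cM))$, which is one of the two desired containments.

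For the reverse inclusion the key step is to show that every network of the full family is a \emph{linear combination} of networks whose last affine map is drawn from $\cM$. I would fix $\eta \in \cN_\sigma(M_1, \dots, M_d)$ and write its last affine map as $\phi = \sum_j c_j \psi_j$ with $\psi_j \in \cM$ and $c_j \in \R$, which is possible because $\cM$ generates $M_d$. The crucial point is that the entire prefix of $\eta$ up to and including the last activation is unaffected by the choice of the last affine map, so that on any input it produces the same value $z$ independently of the $\psi_j$; using the unique decomposition of affine maps into linear part plus translation (Section~\ref{sec:pre}), under which addition and scalar multiplication act separately on the two parts, one checks the identity $\phi(z) = \sum_j c_j \psi_j(z)$. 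Consequently $\eta = \sum_j c_j \eta_j$ as functions, where each $\eta_j \in \cN_\sigma(M_1, \dots, M_{d-1}, \cM)$ is obtained from $\eta$ by substituting the last map with $\psi_j$. This establishes $\cN_\sigma(M_1, \dots, M_d) \subseteq \langle \cN_\sigma(M_1, \dots, M_{d-1}, \cM) \rangle$, and combining Lemma~\ref{lemma:subset} with Lemma~\ref{lemma:generating} gives
\[
\rho(\cN_\sigma(M_1, \dots, M_{d-1}, \cM)) = \rho(\langle \cN_\sigma(M_1, \dots, M_{d-1}, \cM) \rangle) \subseteq \rho(\cN_\sigma(M_1, \dots, M_d)),
\]
so the two inclusions together give the claimed equality.

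I expect the only genuinely delicate point to be the identity $\phi(z) = \sum_j c_j \psi_j(z)$, namely confirming that forming a linear combination of affine maps and then evaluating agrees with evaluating each map and then combining, i.e.\ that the affine (rather than merely linear) nature of the last layer does not spoil linearity in the parameters. This is exactly where the decomposition lemma recalled in the preliminaries is used. Everything else is routine bookkeeping with the inclusions and the already-available Lemmas~\ref{lemma:subset} and~\ref{lemma:generating}.
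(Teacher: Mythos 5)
Your proof is correct and follows essentially the same route as the paper's: both reduce the claim to Lemma~\ref{lemma:generating} combined with the observation that every network with last layer in $M_d$ lies in the linear span of networks with last layer in $\cM$, with the easy inclusion handled by Lemma~\ref{lemma:subset}. If anything, yours is slightly more careful: the paper asserts the equality $\cN_\sigma(M_1, \dots, M_d) = \langle \cN_\sigma(M_1, \dots, M_{d-1}, \cM) \rangle$, which fails in general (the span also contains sums of networks with different prefixes), whereas you claim and use only the inclusion $\cN_\sigma(M_1, \dots, M_d) \subseteq \langle \cN_\sigma(M_1, \dots, M_{d-1}, \cM) \rangle$, which is all the argument needs.
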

\begin{proof}
Trivially, 
\[
\cN_\sigma (M_1, \dots, M_{d-1}, \cF_d)) \subseteq \cN_\sigma (M_1, \dots, M_d)).
\]
For the zero-locus analogous of Lemma~\ref{lemma:subset},
\[
\cI(\cN_\sigma (M_1, \dots, M_d)) \subseteq \cI(\cN_\sigma (M_1, \dots, M_{d-1}, \cF_d)).
\]
To prove the opposite inclusion, write $\cF_d = \{ \phi_1, \dots, \phi_{s} \}$ and note that each neural network $\eta^d$ in $\cN_\sigma (M_1, \dots, M_d)$ can be written as
\[
\eta^d = (x_1 \phi_1 + \cdots + x_s \phi_s) \smcirc \tilde \sigma \smcirc \eta^{d-1} = x_1 (\phi_1 \smcirc \tilde \sigma \smcirc \eta^{d-1}) + \cdots + x_s (\phi_s \smcirc \tilde \sigma \smcirc \eta^{d-1}),
\]
for some $x_1, \dots, x_s \in \R$ and $\eta^{d-1} \in \cN_\sigma (M_1, \dots, M_{d-1})$.

Moreover, note that 
\[
\phi_i \smcirc \tilde \sigma \smcirc \eta^{d-1} \in \cN_\sigma (M_1, \dots, M_{d-1}, \cF_d),
\]
for each $i = 1, \dots, s$.

If $\beta \in \cI(\cN_\sigma (M_1, \dots, M_{d-1}, \cF_d))$, then
\[
\eta^d(\beta) = x_1 (\phi_1 \smcirc \tilde \sigma \smcirc \eta^{d-1}) + \cdots + x_s (\phi_s \smcirc \tilde \sigma \smcirc \eta^{d-1}) = 0.
\]
Thus,
\[
\cI(\cN_\sigma (M_1, \dots, M_{d-1}, \cF_d)) \subseteq \cI(\cN_\sigma (M_1, \dots, M_d)),
\]
completing the proof.
\end{proof}


\begin{thebibliography}{46}
\providecommand{\natexlab}[1]{#1}
\providecommand{\url}[1]{\texttt{#1}}
\expandafter\ifx\csname urlstyle\endcsname\relax
  \providecommand{\doi}[1]{doi: #1}\else
  \providecommand{\doi}{doi: \begingroup \urlstyle{rm}\Url}\fi

\bibitem[Alsentzer et~al.(2020)Alsentzer, Finlayson, Li, and Zitnik]{alsentzer_subgraph_2020}
Emily Alsentzer, Samuel~G. Finlayson, Michelle~M. Li, and Marinka Zitnik.
\newblock Subgraph {Neural} {Networks}.
\newblock In \emph{Advances in {Neural} {Information} {Processing} {Systems}}, 2022.

\bibitem[Bevilacqua et~al.(2022)Bevilacqua, Frasca, Lim, Srinivasan, Cai, Balamurugan, Bronstein, and Maron]{bevilacqua_equivariant_2022}
Beatrice Bevilacqua, Fabrizio Frasca, Derek Lim, Balasubramaniam Srinivasan, Chen Cai, Gopinath Balamurugan, Michael~M. Bronstein, and Haggai Maron.
\newblock Equivariant {Subgraph} {Aggregation} {Networks}.
\newblock In \emph{International Conference of Learning Representations}, 2022.

\bibitem[Bronstein et~al.(2021)Bronstein, Bruna, Cohen, and Veličković]{bronstein_geometric_2021}
Michael~M. Bronstein, Joan Bruna, Taco Cohen, and Petar Veličković.
\newblock Geometric {Deep} {Learning}: {Grids}, {Groups}, {Graphs}, {Geodesics}, and {Gauges}.
\newblock \emph{arXiv:2104.13478 [cs, stat]}, 2021.
\newblock URL \url{http://arxiv.org/abs/2104.13478}.

\bibitem[Chanussot et~al.(2021)Chanussot, Das, Goyal, Lavril, Shuaibi, Riviere, Tran, Heras-Domingo, Ho, Hu, Palizhati, Sriram, Wood, Yoon, Parikh, Zitnick, and Ulissi]{chanussot_open_2021}
Lowik Chanussot, Abhishek Das, Siddharth Goyal, Thibaut Lavril, Muhammed Shuaibi, Morgane Riviere, Kevin Tran, Javier Heras-Domingo, Caleb Ho, Weihua Hu, Aini Palizhati, Anuroop Sriram, Brandon Wood, Junwoong Yoon, Devi Parikh, C.~Lawrence Zitnick, and Zachary Ulissi.
\newblock Open {Catalyst} 2020 ({OC20}) {Dataset} and {Community} {Challenges}.
\newblock \emph{ACS Catalysis}, 2021.

\bibitem[Cohen(2021)]{cohen_equivariant_2021}
Taco Cohen.
\newblock Equivariant convolutional networks.
\newblock \emph{{PhD} {Thesis}}, 2021.

\bibitem[Cohen \& Welling(2014)Cohen and Welling]{cohen_learning_2014}
Taco Cohen and Max Welling.
\newblock Learning the {Irreducible} {Representations} of {Commutative} {Lie} {Groups}.
\newblock In \emph{Proceedings of the 31st {International} {Conference} on {Machine} {Learning}}, PMLR, 2014.

\bibitem[Cohen \& Welling(2016{\natexlab{a}})Cohen and Welling]{cohen_group_2016}
Taco Cohen and Max Welling.
\newblock Group {Equivariant} {Convolutional} {Networks}.
\newblock In \emph{Proceedings of {The} 33rd {International} {Conference} on {Machine} {Learning}}, PMLR, 2016{\natexlab{a}}.

\bibitem[Cohen \& Welling(2016{\natexlab{b}})Cohen and Welling]{cohen_steerable_2016}
Taco~S. Cohen and Max Welling.
\newblock Steerable {CNNs}.
\newblock In \emph{International {Conference} on {Learning} {Representations}}, 2016{\natexlab{b}}.

\bibitem[Cohen et~al.(2019)Cohen, Weiler, Kicanaoglu, and Welling]{cohen_gauge_2019}
Taco~S. Cohen, Maurice Weiler, Berkay Kicanaoglu, and Max Welling.
\newblock Gauge {Equivariant} {Convolutional} {Networks} and the {Icosahedral} {CNN}, 2019.
\newblock In \emph{Proceedings of {The} 36th {International} {Conference} on {Machine} {Learning}}, PMLR, 2019.

\bibitem[Davis(1979)]{davis_circulant_1979}
Philip~J. Davis.
\newblock Circulant {Matrices}.
\newblock \emph{Wiley}, 1979.

\bibitem[Dieleman et~al.(2015)Dieleman, Willett, and Dambre]{dieleman_rotation-invariant_2015}
Sander Dieleman, Kyle~W. Willett, and Joni Dambre.
\newblock Rotation-invariant convolutional neural networks for galaxy morphology prediction.
\newblock \emph{Monthly Notices of the Royal Astronomical Society}, 2015.

\bibitem[Dym \& Maron(2020)Dym and Maron]{dym_universality_2020}
Nadav Dym and Haggai Maron.
\newblock On the {Universality} of {Rotation} {Equivariant} {Point} {Cloud} {Networks}.
\newblock In \emph{International {Conference} on {Learning} {Representations}}, 2020.

\bibitem[E(2019)]{e_machine_2019}
Weinan E.
\newblock Machine {Learning}: {Mathematical} {Theory} and {Scientific} {Applications}.
\newblock \emph{Notices of the American Mathematical Society}, 2019.

\bibitem[Frasca et~al.(2022)Frasca, Bevilacqua, Bronstein, and Maron]{frasca_understanding_2022}
Fabrizio Frasca, Beatrice Bevilacqua, Michael~M. Bronstein, and Haggai Maron.
\newblock Understanding and {Extending} {Subgraph} {GNNs} by {Rethinking} {Their} {Symmetries}.
\newblock In \emph{Advances in {Neural} {Information} {Processing} {Systems}}, 2022.

\bibitem[Fulton \& Harris(2004)Fulton and Harris]{fulton_representation_2004}
William Fulton and Joe Harris.
\newblock Representation {Theory}.
\newblock \emph{Graduate {Texts} in {Mathematics}}, Springer, 2004.

\bibitem[Geerts(2020)]{geerts_expressive_2020}
Floris Geerts.
\newblock The expressive power of kth-order invariant graph networks.
\newblock arXiv:2007.12035 [cs, math, stat], 2020.
\newblock URL \url{http://arxiv.org/abs/2007.12035}.


\bibitem[Geerts \& Reutter(2022)Geerts and Reutter]{geerts_expressiveness_2022}
Floris Geerts and Juan~L Reutter.
\newblock {Expressiveness} {and} {Approximation} {Properties} {of} {Graph} {Neural} {Networks}.
\newblock In \emph{International {Conference} on {Learning} {Representations}}, 2020.


\bibitem[Geerts et~al.(2021)Geerts, Muñoz, Riveros, and Vrgoč]{geerts_expressive_2021}
Floris Geerts, Thomas Muñoz, Cristian Riveros, and Domagoj Vrgoč.
\newblock Expressive {Power} of {Linear} {Algebra} {Query} {Languages}.
\newblock In \emph{Proceedings of the 40th {ACM} {SIGMOD}-{SIGACT}-{SIGAI} {Symposium} on {Principles} of {Database} {Systems}}, 2021.

\bibitem[Gori et~al.(2005)Gori, Monfardini, and Scarselli]{gori_new_2005}
M.~Gori, G.~Monfardini, and F.~Scarselli.
\newblock A new model for learning in graph domains.
\newblock In \emph{Proceedings. 2005 {IEEE} {International} {Joint} {Conference} on {Neural} {Networks}, 2005.}, 2005.

\bibitem[Joshi et~al.(2023)Joshi, Bodnar, Mathis, Cohen, and Lio]{joshi_expressive_2023}
Chaitanya~K. Joshi, Cristian Bodnar, Simon~V. Mathis, Taco Cohen, and Pietro Lio.
\newblock On the {Expressive} {Power} of {Geometric} {Graph} {Neural} {Networks}.
\newblock In \emph{International Conference of Learning Representations}, 2023.

\bibitem[Joshi et~al.(2024)Joshi, Jamasb, Viñas, Harris, Mathis, Morehead, and Liò]{joshi_grnade_2024}
Chaitanya~K. Joshi, Arian~R. Jamasb, Ramon Viñas, Charles Harris, Simon~V. Mathis, Alex Morehead, and Pietro Liò.
\newblock {gRNAde}: {Geometric} {Deep} {Learning} for {3D} {RNA} inverse design.
\newblock Methods Mol. Biol. 2025.


\bibitem[Kannappan(2009)]{kannappan_functional_2009}
Palaniappan Kannappan.
\newblock Functional {Equations} and {Inequalities} with {Applications}, Springer, 2009.

\bibitem[Keriven \& Peyré(2019)Keriven and Peyré]{keriven_universal_2019}
Nicolas Keriven and Gabriel Peyré.
\newblock Universal {Invariant} and {Equivariant} {Graph} {Neural} {Networks}.
\newblock In \emph{Advances in {Neural} {Information} {Processing} {Systems}}, 2019.

\bibitem[Kipf \& Welling(2017)Kipf and Welling]{kipf_semi-supervised_2017}
Thomas~N. Kipf and Max Welling.
\newblock Semi-{Supervised} {Classification} with {Graph} {Convolutional} {Networks}.
\newblock In \emph{International Conference of Learning Representations}, 2017.

\bibitem[Kiss \& Laczkovich(2014)Kiss and Laczkovich]{kiss_linear_2014}
Gergely Kiss.
\newblock Linear functional equations, \emph{Phd Thesis}, 2014.

\bibitem[Kondor \& Trivedi(2018)Kondor and Trivedi]{kondor_generalization_2018}
Risi Kondor and Shubhendu Trivedi.
\newblock On the {Generalization} of {Equivariance} and {Convolution} in {Neural} {Networks} to the {Action} of {Compact} {Groups}.
\newblock In \emph{Proceedings of the 35th {International} {Conference} on {Machine} {Learning}}, PMLR, 2018.

\bibitem[Kondor et~al.(2018)Kondor, Lin, and Trivedi]{kondor_clebsch-gordan_2018}
Risi Kondor, Zhen Lin, and Shubhendu Trivedi.
\newblock Clebsch-{Gordan} {Nets}: a {Fully} {Fourier} {Space} {Spherical} {Convolutional} {Neural} {Network}, November 2018.
\newblock In \emph{Advances in {Neural} {Information} {Processing} {Systems}}, 2018.

\bibitem[Lovász(2012)]{lovasz_large_2012}
László Lovász.
\newblock Large {Networks} and {Graph} {Limits}.
\newblock American Mathematical Society, 2012.

\bibitem[Maron et~al.(2018)Maron, Ben-Hamu, Shamir, and Lipman]{maron_invariant_2018}
Haggai Maron, Heli Ben-Hamu, Nadav Shamir, and Yaron Lipman.
\newblock Invariant and {Equivariant} {Graph} {Networks}.
\newblock In \emph{International {Conference} on {Learning} {Representations}}, 2018.

\bibitem[Maron et~al.(2019{\natexlab{a}})Maron, Ben-Hamu, Serviansky, and Lipman]{maron_provably_2019}
Haggai Maron, Heli Ben-Hamu, Hadar Serviansky, and Yaron Lipman.
\newblock Provably {Powerful} {Graph} {Networks}.
\newblock In \emph{International Conference of Learning Representations}, 2019{\natexlab{a}}.

\bibitem[Maron et~al.(2019{\natexlab{b}})Maron, Fetaya, Segol, and Lipman]{maron_universality_2019}
Haggai Maron, Ethan Fetaya, Nimrod Segol, and Yaron Lipman.
\newblock On the {Universality} of {Invariant} {Networks}.
\newblock In \emph{Proceedings of the 36th {International} {Conference} on {Machine} {Learning}}, PMLR, 2019{\natexlab{b}}.

\bibitem[Maron et~al.(2020)Maron, Litany, Chechik, and Fetaya]{maron_learning_2020}
Haggai Maron, Or~Litany, Gal Chechik, and Ethan Fetaya.
\newblock On {Learning} {Sets} of {Symmetric} {Elements}.
\newblock In \emph{Proceedings of the 37th {International} {Conference} on {Machine} {Learning}}, PMLR, 2020.

\bibitem[Morris et~al.(2019)Morris, Ritzert, Fey, Hamilton, Lenssen, Rattan, and Grohe]{morris_weisfeiler_2019}
Christopher Morris, Martin Ritzert, Matthias Fey, William~L. Hamilton, Jan~Eric Lenssen, Gaurav Rattan, and Martin Grohe.
\newblock Weisfeiler and {Leman} {Go} {Neural}: {Higher}-{Order} {Graph} {Neural} {Networks}.
\newblock In \emph{Proceedings of the AAAI Conference on Artificial Intelligence}, 2019.

\bibitem[Pacini et~al.(2024)Pacini, Dong, Lepri, and Santin]{pacini_characterization_2024}
Marco Pacini, Xiaowen Dong, Bruno Lepri, and Gabriele Santin.
\newblock A {Characterization} {Theorem} for {Equivariant} {Networks} with {Point}-wise {Activations}.
\newblock In \emph{International Conference of Learning Representations}, 2024.

\bibitem[Pearce-Crump(2023)]{pearce-crump_connecting_2023}
Edward Pearce-Crump.
\newblock Connecting {Permutation} {Equivariant} {Neural} {Networks} and {Partition} {Diagrams}.
\newblock arXiv:2212.08648 [cs, math, stat], 2023.
\newblock URL \url{http://arxiv.org/abs/2212.08648}.

\bibitem[Puny et~al.(2023)Puny, Lim, Kiani, Maron, and Lipman]{puny_equivariant_2023}
Omri Puny, Derek Lim, Bobak~T. Kiani, Haggai Maron, and Yaron Lipman.
\newblock Equivariant {Polynomials} for {Graph} {Neural} {Networks}.
\newblock In \emph{Proceedings of the 40th {International} {Conference} on {Machine} {Learning}}, PMLR, 2023.

\bibitem[Qi et~al.(2017)Qi, {Su, Hao}, {Mo, Kaichun}, and {Guibas, Leonidas J.}]{qi_pointnet_2017}
Charles~R. Qi, {Su, Hao}, {Mo, Kaichun}, and {Guibas, Leonidas J.}
\newblock {PointNet}: {Deep} {Learning} on {Point} {Sets} for {3D} {Classification} and {Segmentation}.
\newblock In \emph{2017 {IEEE} {Conference} on {Computer} {Vision} and {Pattern} {Recognition} ({CVPR})}.

\bibitem[Ravanbakhsh(2020)]{ravanbakhsh_universal_2020}
Siamak Ravanbakhsh.
\newblock Universal {Equivariant} {Multilayer} {Perceptrons}.
\newblock In \emph{Proceedings of the 37th {International} {Conference} on {Machine} {Learning}}, PMLR, 2020.

\bibitem[Ravanbakhsh et~al.()Ravanbakhsh, Schneider, and Póczos]{ravanbakhsh_equivariance_nodate}
Siamak Ravanbakhsh, Jeff Schneider, and Barnabás Póczos.
\newblock Equivariance {Through} {Parameter}-{Sharing}.
\newblock In \emph{Proceedings of the 34th {International} {Conference} on {Machine} {Learning}}, PMLR, 2017.

\bibitem[Scarselli et~al.(2009)Scarselli, Gori, Tsoi, Hagenbuchner, and Monfardini]{scarselli_graph_2009}
Franco Scarselli, Marco Gori, Ah~Chung Tsoi, Markus Hagenbuchner, and Gabriele Monfardini.
\newblock The graph neural network model.
\newblock  IEEE Transactions on Neural Networks, 2009.

\bibitem[Weisfeiler \& Leman(1968)Weisfeiler and Leman]{weisfeiler_reduction_1968}
B~Yu Weisfeiler and A~A Leman.
\newblock {The} {Reduction} {of} {a} {Graph} {to} {Canonical} {Form} {and} {the} {Algebra} {Which} {Appears} {Therein}.
\newblock 1968.

\bibitem[Wood \& Shawe-Taylor(1996)Wood and Shawe-Taylor]{wood_representation_1996}
Jeffrey Wood and John Shawe-Taylor.
\newblock Representation theory and invariant neural networks.
\newblock \emph{Discrete Applied Mathematics}, 1996.

\bibitem[Xu et~al.(2019)Xu, Hu, Leskovec, and Jegelka]{xu_how_2019}
Keyulu Xu, Weihua Hu, Jure Leskovec, and Stefanie Jegelka.
\newblock How {Powerful} are {Graph} {Neural} {Networks}?
\newblock In \emph{International Conference of Learning Representations}, 2019.

\bibitem[Yarotsky(2018)]{yarotsky_universal_2018}
Dmitry Yarotsky.
\newblock Universal approximations of invariant maps by neural networks.
\newblock In \emph{Constructive Approximation}, 2022.

\bibitem[Zhang et~al.(2024)Zhang, Gai, Du, Ye, He, and Wang]{zhang_beyond_2024}
Bohang Zhang, Jingchu Gai, Yiheng Du, Qiwei Ye, Di~He, and Liwei Wang.
\newblock Beyond {Weisfeiler}-{Lehman}: {A} {Quantitative} {Framework} for {GNN} {Expressiveness}.
\newblock In \emph{International Conference of Learning Representations}, 2024.

\bibitem[Zhou(2020)]{zhou_universality_2020}
Ding-Xuan Zhou.
\newblock Universality of deep convolutional neural networks.
\newblock In \emph{Applied and Computational Harmonic Analysis}, 2020.

\end{thebibliography}
\end{document}